\documentclass[sigconf]{acmart}
\settopmatter{printacmref=false}
\setcopyright{none}
\renewcommand\footnotetextcopyrightpermission[1]{}
\pagestyle{plain}

\usepackage{algorithm, algorithmic, amsmath, bm, subcaption, cleveref, multirow, makecell, mathtools, graphicx, bbm, rotating, balance}
\usepackage{thmtools, thm-restate}
\usepackage[normalem]{ulem}
\usepackage{enumitem}
\usepackage{tcolorbox}
\usepackage[table]{xcolor}
\useunder{\uline}{\ul}{}

\AtBeginDocument{%
  }


\acmConference[Conference acronym 'XX]{Make sure to enter the correct
  conference title from your rights confirmation emai}{June 03--05,
  2018}{Woodstock, NY}
\acmISBN{978-1-4503-XXXX-X/18/06}




\newcommand{\beqa}{\begin{eqnarray}}
\newcommand{\eeqa}{\end{eqnarray}}
\newcommand{\beq}{\begin{equation}}
\newcommand{\eeq}{\end{equation}}
\newcommand{\ben}{\begin{enumerate}}
\newcommand{\een}{\end{enumerate}}
\newcommand{\bit}{\begin{itemize}}
\newcommand{\eit}{\end{itemize}}
\newcommand{\bi}{\begin{itemize} \item}
\newcommand{\ei}{\end{itemize}}

\newcommand{\begindef}{\begin{Definition} \rm}
\newcommand{\beginexa}{\begin{Example} \rm}
\newcommand{\beginthe}{\begin{Theorem} \rm}
\newcommand{\beginpro}{\begin{Proposition} \rm}
\newcommand{\beginlem}{\begin{Lemma} \rm}
\newcommand{\begincon}{\begin{Conjecture} \rm}
\newcommand{\begincor}{\begin{Corollary} \rm}


\newcommand{\eat}[1]{}

\renewcommand{\paragraph}[1]{\noindent\textbf{#1.}}

\def\papernumber #1 raised #2 {
\vspace{-#2}
\vbox to 0pt{\hfill\framebox{\bf Paper Number #1}}
\vspace{#2}
}




\def\T{{\scriptscriptstyle\mathsf{T}}}

\def\s{\bm{s}}
\def\S{\bm{S}}
\def\Wl{\bm{W}^{(l)}}
\def\uli{\bm{u}^{(l)}_i}
\def\vli{\bm{v}^{(l)}_i}
\def\Ul{\bm{U}^{(l)}}
\def\Vl{\bm{V}^{(l)}}

\def\U{\mathcal{U}}
\def\I{\mathcal{I}}
\def\S{\mathcal{S}}

\def\q{\bm{q}}

\newcommand{\bluecell}[1]{\cellcolor[HTML]{CFE2F3}#1}

\newcommand{\redcell}[1]{\cellcolor[HTML]{FADBD8}#1}

\def\algname{\textsc{HiLoMoE}}
\begin{document}

\title{Hierarchical LoRA MoE for Efficient CTR Model Scaling}


\author{Zhichen Zeng$^1$\, Mengyue Hang$^2$\, Xiaolong Liu$^2$\, Xiaoyi Liu$^2$\, Xiao Lin$^1$\, Ruizhong Qiu$^1$\, Tianxin Wei$^1$\, Zhining Liu$^1$\, Siyang Yuan$^2$\, Chaofei Yang$^2$\, Yiqun Liu$^2$\, Hang Yin$^2$\, Jiyan Yang$^2$\, Hanghang Tong$^1$\\
$^1$ University of Illinois Urbana-Champaign, $^2$ Meta AI\\
\small\texttt{zhichenz@illinois.edu, hangm@meta.com}}
\renewcommand{\shortauthors}{Zhichen Zeng et al.}

\begin{abstract}
    Deep models have driven significant advances in click‑through rate (CTR) prediction.
    While \emph{vertical scaling} via layer stacking improves model expressiveness, the layer-by-layer sequential computation poses challenges to efficient scaling.
    Conversely, \emph{horizontal scaling} through Mixture of Experts (MoE) achieves efficient scaling by activating a small subset of experts in parallel, but flat MoE layers may struggle to capture the hierarchical structure inherent in recommendation tasks.
    To push the Return-On-Investment (ROI) boundary, we explore the complementary strengths of both directions and propose \algname, a hierarchical LoRA MoE framework that enables holistic scaling in a parameter-efficient manner.
    Specifically, \algname\ employs lightweight rank-1 experts for parameter-efficient horizontal scaling, and stacks multiple MoE layers with hierarchical routing to enable combinatorially diverse expert compositions. Unlike conventional stacking, \algname\ routes based on prior layer scores rather than outputs, allowing all layers to execute in parallel.
    A principled three-stage training framework ensures stable optimization and expert diversity. 
    Experiments on four public datasets show that \algname\ achieving better performance-efficiency tradeoff, achieving an average AUC improvement of 0.20\% in AUC and 18.5\% reduction in FLOPs compared to the non-MoE baseline.
\end{abstract}

\begin{CCSXML}
<ccs2012>
 <concept>
  <concept_id>00000000.0000000.0000000</concept_id>
  <concept_desc>Do Not Use This Code, Generate the Correct Terms for Your Paper</concept_desc>
  <concept_significance>500</concept_significance>
 </concept>
 <concept>
  <concept_id>00000000.00000000.00000000</concept_id>
  <concept_desc>Do Not Use This Code, Generate the Correct Terms for Your Paper</concept_desc>
  <concept_significance>300</concept_significance>
 </concept>
 <concept>
  <concept_id>00000000.00000000.00000000</concept_id>
  <concept_desc>Do Not Use This Code, Generate the Correct Terms for Your Paper</concept_desc>
  <concept_significance>100</concept_significance>
 </concept>
 <concept>
  <concept_id>00000000.00000000.00000000</concept_id>
  <concept_desc>Do Not Use This Code, Generate the Correct Terms for Your Paper</concept_desc>
  <concept_significance>100</concept_significance>
 </concept>
</ccs2012>
\end{CCSXML}


\keywords{Recommendation, Mixture of Experts, LoRA, Model Scaling}

\received{20 February 2007}
\received[revised]{12 March 2009}
\received[accepted]{5 June 2009}

\maketitle

\section{Introduction}\label{sec:intro}

\begin{figure}[t]
    \centering
    \includegraphics[width=.9\linewidth]{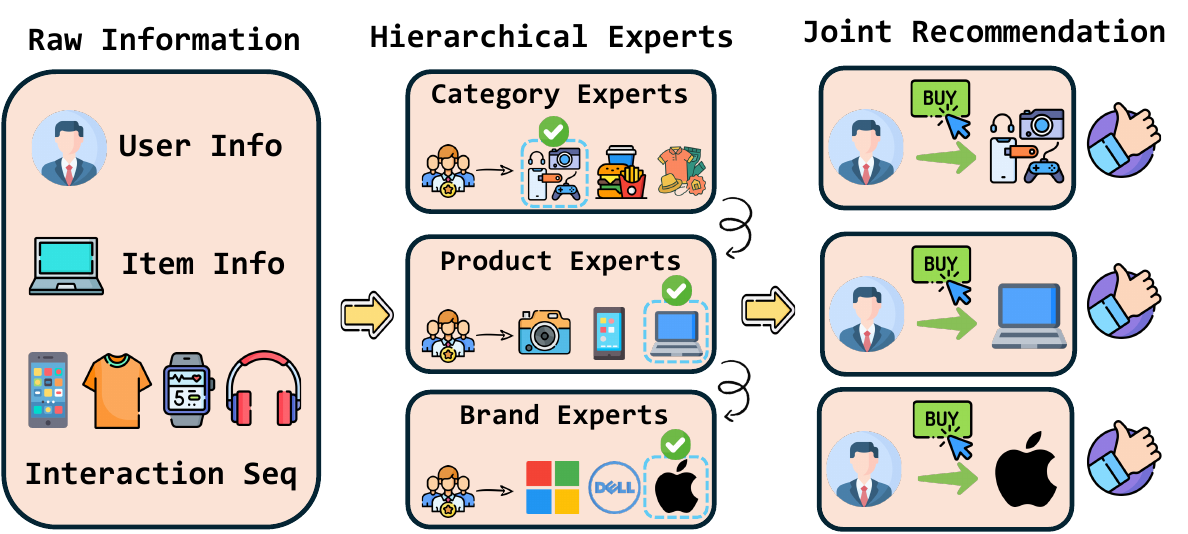}
    \caption{Motivation for Hierarchical MoE. 
    Recommendation follows a hierarchical structure, where user–item interactions span multiple granularities.
    By hierarchically selecting experts such as \texttt{Electronics} (category), \texttt{Laptop} (product), and \texttt{Apple} (brand), the model collaboratively delivers personalized recommendations at different levels, effectively leveraging hierarchical knowledge for more accurate predictions.}
    \vspace{-15pt}
    \label{fig:teaser}
\end{figure}
Click-through rate (CTR) prediction serves as a foundational task in modern recommender systems, enabling platforms to rank items, allocate ad impressions, and personalize content delivery~\cite{lin2024clickprompt,liu2024collaborative,yoo2024ensuring, DBLP:conf/kdd/LiFAH25,DBLP:conf/nips/BanZLQFKTH24}.
As the user and system scale continue to grow, there has been an increasing demand for more expressive and efficient CTR models that can make accurate predictions under tight latency and resource constraints~\cite{borisyuk2024lirank,xia2023transact,zeng2023parrot,zeng2023generative,zeng2024graph,zeng2024hierarchical,zeng2025pave}.
To meet this demand, \emph{model scaling}, the ability to grow model capacity in a performance- and cost-effective manner, emerges as a critical challenge~\cite{zhang2022dhen,zhang2024wukong,liang2025external,zhang2025balancing}.

In recent years, we have observed extensive efforts on \emph{vertical scaling} via layer stacking~\cite{zhang2022dhen,zhang2024wukong,zeng2024interformer}, where deeper networks expand model capacity and allow for more complex representations.
Despite noticeable performance gains, vertical scaling inherently requires sequential computation, leading to substantial inference latency as depth increases.
In addition, stacking more layers often introduces a large number of parameters, making vertical scaling less suitable for resource-constrained scenarios where parameter efficiency is critical.
Moreover, vertical scaling lacks conditional computation as every input passes through the full stack of layers, leaving little room for personalized computation that is crucial for CTR prediction where user interests and contexts vary widely.

To enable more efficient model scaling, recent works explore \emph{horizontal scaling} based on the mixture of experts (MoE) framework, ~\cite{fedus2022switch,zoph2022stmoe} and have achieved great success in large language models~\cite{jiang2024mixtral,liu2024deepseek}.
MoE introduces multiple experts and dynamically selects a \emph{small subset} for each input, enabling personalized computation while significantly reducing inference cost and memory usage.
However, in CTR prediction, both user and item information often exhibit a strong hierarchical structure that flat, single-layer MoE may fail to capture~\cite{zheng2022hien}.
As shown in Figure~\ref{fig:teaser}, item features typically include high-level category information (e.g., electronics or apparel), followed by mid-level product information (e.g., camera or laptop), and finally finer-grained brand identity (e.g., Apple or Microsoft).
Such multi-level semantics are difficult to disentangle within a single-layer MoE, often leading to under-specialized experts and limited expressiveness.
This motivates the need for \emph{hierarchically-structured expert designs} that can better model the compositional nature of inputs and promote expert diversity.

To address these challenges, it is essential to explore complementary roles of vertical and horizontal scaling.
In this paper, we propose \algname, a hierarchical LoRA MoE framework that enables \emph{holistic model scaling} in both vertical and horizontal directions in a parameter-efficient manner, which is built upon three key innovations.
First (\emph{LoRA experts}), to support efficient horizontal scaling, we represent each expert as a rank-1 perturbation of a shared base weight matrix, which significantly reduces parameter and memory overhead, allowing us to scale the number of experts without inflating the overall model size.
Second (\emph{hierarchical routing}), to achieve efficient vertical scaling, we introduce a hierarchical routing mechanism, where the expert selection at each layer conditions on the routing scores from previous layers.
For one thing, such hierarchically-structured design enables combinatorially diverse expert paths, boosting model expressiveness without increasing per-layer width.
For another, the conditional routing computation depends only on routing scores rather than the intermediate expert outputs, enabling parallel inference across layers.
Third (\emph{training framework}), to train the complex system, we propose a three-stage training pipeline, augmented with auxiliary losses and controlled gradient flow, to stabilize training and promote expert diversity.

The main contributions of this paper are summarized as follows:
\begin{itemize}
    \item \textbf{Model design.} We propose a hierarchical LoRA MoE framework to achieve efficient scaling for Transformer-based CTR models. The LoRA experts support efficient horizontal scaling for personalized computation, while the hierarchical routing mechanism enables vertical scaling through deep expert compositions with parallelizable inference.
    \item \textbf{Training framework.} We introduce a principled training framework, consisting of a three-stage training pipeline for stable optimization, and auxiliary losses to ensure balanced expert utilization and effective specialization.
    \item \textbf{Experiments.} Benchmark experiments suggest that \algname\ consistently enhances Transformer-based models by an average of 0.20\% in AUC and 18.5\% reduction in FLOPs compared to non-MoE counterparts. Besides, \algname\ exhibits promising scaling in depth and width, with performance increasing alongside the number of layers and experts.
\end{itemize}

\vspace{-5pt}
\section{Related Works}\label{sec:related}

In the era of big data and AI~\cite{lin2025quantization,lin2025toklip,lin2024duquant,li2025flow}, improving recommendation quality while controlling model complexity is a core challenge~\cite{liu2023class,zheng2024pyg,jiang2025image}, since even minor gains in prediction accuracy brings substantial improvements in revenue and user experience~\cite{xu2024slog,xu2024discrete,yan2024pacer,yan2024thegcn,yan2021bright,yan2023reconciling,yu2025joint}. 
In this section, we review related works on CTR prediction, MoE and LoRA as the foundation to understand \algname.

\vspace{-5pt}
\subsection{CTR Prediction Models}
Click‑Through Rate (CTR) prediction estimates the likelihood of a user clicking an item, an essential component in recommender systems.
Early models~\cite{cheng2016wide,guo2017deepfm,lian2018xdeepfm,sun2021fm2,wang2021dcn,yang2017bridging,zhou2020can} follow the factorization machine paradigm, combining memorization and generalization by fusing linear models with deep neural networks.
With richer user sequences, Transformer‑based models~\cite{sun2019bert4rec,lyu2020deep,lin2024clickprompt,lin2024backtime,lin2025cats} become prominent for capturing sequential dependencies and complex interactions.
Models like BST employ Transformer encoders for behavior sequence modeling~\cite{chen2019behavior}, and DIN~\cite{zhou2018deep} and DIEN~\cite{zhou2019deep} employed attention mechanisms and evolving interest to dynamically focus on relevant user actions.
Building on these foundations, TransAct~\cite{xia2023transact} blends real‑time user actions with long‑term interests via a hybrid real‑time/batch modeling framework, LiRank~\cite{borisyuk2024lirank} employs ensembles of multiple interaction modules in large‑scale ranking systems, CARL~\cite{chen2024cache} accelerates computation through cache‑aware mechanisms, END4Rec~\cite{han2024end4rec} introduces an efficient miner and denoising modules for multi‑behavior sequences, SRP4CTR~\cite{han2024enhancing} bridges pre‑trained sequential encoders with CTR models via cross‑attention, and InterFormer~\cite{zeng2024interformer} enhances cross‑modal integration via interleaved information flow.

\vspace{-5pt}
\subsection{MoE and LoRA}
To achieve efficient model scaling, mixture of experts and low-rank adaptation have served as prominent approaches.

MoE architecture increases model capacity by dynamically selecting a small subset of experts for each input, enabling personalized computation while maintaining inference efficiency.
Early works~\cite{jacobs1991adaptive,nowlan1990evaluation,jordan1994hierarchical,ai2025resmoe} utilize MoE to decompose the large task into smaller ones to enhance efficiency.
Recently, MoE has attracted extensive attention due to its success in efficient scaling in large language models~\cite{liu2024deepseek,jiang2024mixtral,fedus2022switch, du2022glam}.
Sparsely-gated MoE layers~\cite{shazeer2017outrageously,lepikhin2020gshard,fedus2022switch} achieve significant improvement in model capacity with minor efficiency loss.
Auxiliary losses are further proposed to achieve stable model training~\cite{zoph2022stmoe} and balanced expert loading~\cite{fedus2022switch}.

In parallel, LoRA~\cite{hu2021lora,zhang2023composing,liu2024dora, lin2025cats} enables efficient model adaptation by injecting low-rank parameter matrices into frozen networks, delivering full fine-tuning performance with far fewer trainable parameters.
Modular strategies~\cite{pfeiffer2020adapterfusion,huang2023lorahub,sung2022vl} utilize LoRA for multi-task learning through dynamic module fusion.
To reduce redundancy and parameter overhead, different works adopt weight tying~\cite{renduchintala2023tied}, random adaptation~\cite{kopiczko2023vera,bershatsky2024lotr}, and federated-efficient designs~\cite{hyeon2021fedpara}.

Recently, MoE and LoRA have converged in hybrid designs to unlock richer adaptability and task efficiency.
Early examples~\cite{wang2022adamix,gou2023mixture} blend multiple adaptation modules in each Transformer layer using MoE-style routing to boost performance without additional inference.
More advanced designs include task-aware routing~\cite{liu2023moelora,liu2024adamole}, asymmetric LoRA experts~\cite{tian2024hydralora,wang2024malora}, and composite expert architectures for better performance–efficiency tradeoffs~\cite{li2024mixlora,wu2024mixture,zeng2025smore}.

\subsection{MoE on CTR Models}
Recent CTR models leverage MoE frameworks to enhance scalability and context sensitivity.
Mixture of LoRA~\cite{yang2024mlora,gao2025mlora+,yagel2025moe} embed domain-specific adapters as experts and route inputs dynamically.
Multi-task and multi-domain modeling efforts~\cite{zhang2024m3oe,ma2018modeling} decompose knowledge into shared, domain-specific, and task-specific experts, enabling flexible cross-domain personalized adaptation.
Personalization and modality-aware routing are explore~\cite{li2025rankexpert,bian2023multi,zhu2020recommendation,xu2024mome,nguyen2025multi}, mixing textual, behavioral, auxiliary, and modality-biased experts for diverse user contexts.
Hierarchical MoE architectures~\cite{tang2020progressive,li2023adatt,zhang2025hierarchical} use multi-level gating structures to sequentially integrate shared, task-specific, and temporal information.
To encourage expert diversity, D-MoE~\cite{wang2025enhancing} introduces a cross-expert de-correlation loss.
\section{Preliminaries}\label{sec:prelim}
We use bold uppercase letters for matrices (e.g., $\bm{X}$), bold lowercase letters for vectors (e.g., $\bm{x}$), and lowercase letters for scalars (e.g., $n$).
We use superscripts to denote layer index, e.g., $\bm{u}^{(l)}$ and subscripts to denote expert index, e.g., $\bm{u}_{i}$.
User set and item set are denoted by $\U$ and $\I$, respectively.
The interaction sequence of a user $u\in\U$ is denoted as $S_u=[i_{u,1}, \dots, i_{u,T}]\in\S$, where $i_{u,t}\in\I$.
We use $y_{i_{u,t}}\in\{0,1\}$ to indicate whether the user $u$ clicked on the item $i_{u,t}$ at timestamp $t$.

\paragraph{CTR prediction}
CTR prediction estimates the probability of a user clicking on an item given heterogeneous information, e.g., sparse categorical features, numerical features, and interaction sequences.
The goal of CTR prediction is to learn a function $f\colon\U\times\I\times\S\to[0,1]$ such that $\Pr(y_{i_{u,T+1}}\!=\!1 \mid u, i_{u,T+1}, S_u; \theta) = f(u, i_{u,T+1}, S_u; \theta)$ with high accuracy.

\paragraph{Mixture of Experts}
An MoE layer consists of $K$ experts $\{E_k\}_{k=1}^K$ and a gating function $g\colon \mathbb{R}^{d}\to\Delta^{K-1}$ that outputs probabilities over experts.  Given an input $h\in\mathbb{R}^{d}$, the output is a weighted sum of expert outputs:
\begin{equation*}
  \mathrm{MoE}(h) = \sum_{k=1}^K g_k(h) E_k(h).
\end{equation*}

\paragraph{Low‑Rank Adaptation}
LoRA introduces a pair of low‑rank matrices $(\bm{A},\bm{B})$ into a linear
layer $\bm{W}\in\mathbb{R}^{d_{\text{out}}\times d_{\text{in}}}$ by
parameterizing
\begin{equation*}
  \bm{W}' = \bm{W} + \bm{BA},
\end{equation*}
where $\bm{A}\in\mathbb{R}^{r\times d_{\text{in}}}$ and $\bm{B}\in\mathbb{R}^{d_{\text{out}}\times r}$ with rank $r\ll \min(d_{\text{in}},d_{\text{out}})$.
In general, $\bm{W}$ is the pre-trained model parameter, which remains frozen during fine-tuning, and $\bm{A},\bm{B}$ are learnable parameters for fine-tuning.
This reduces trainable parameters to $O(r(d_{\text{in}} + d_{\text{out}}))$ and allows efficient storage of multiple task adapters.
\section{Methodology}\label{sec:method}

In this section, we introduce our proposed \algname, a hierarchical LoRA mixture of experts architecture for holistic model scaling.
We first provide an overview of the model architecture in Section~\ref{sec:overview}.
Afterwards, we introduce the parameter-efficient LoRA expert design in Section~\ref{sec:lora}, followed by the hierarchical routing mechanism in Section~\ref{sec:routing}.
A principled training framework is further introduced in Section~\ref{sec:train}.

\subsection{Model Overview}\label{sec:overview}
\begin{figure}[t]
    \centering
    \includegraphics[width=\linewidth]{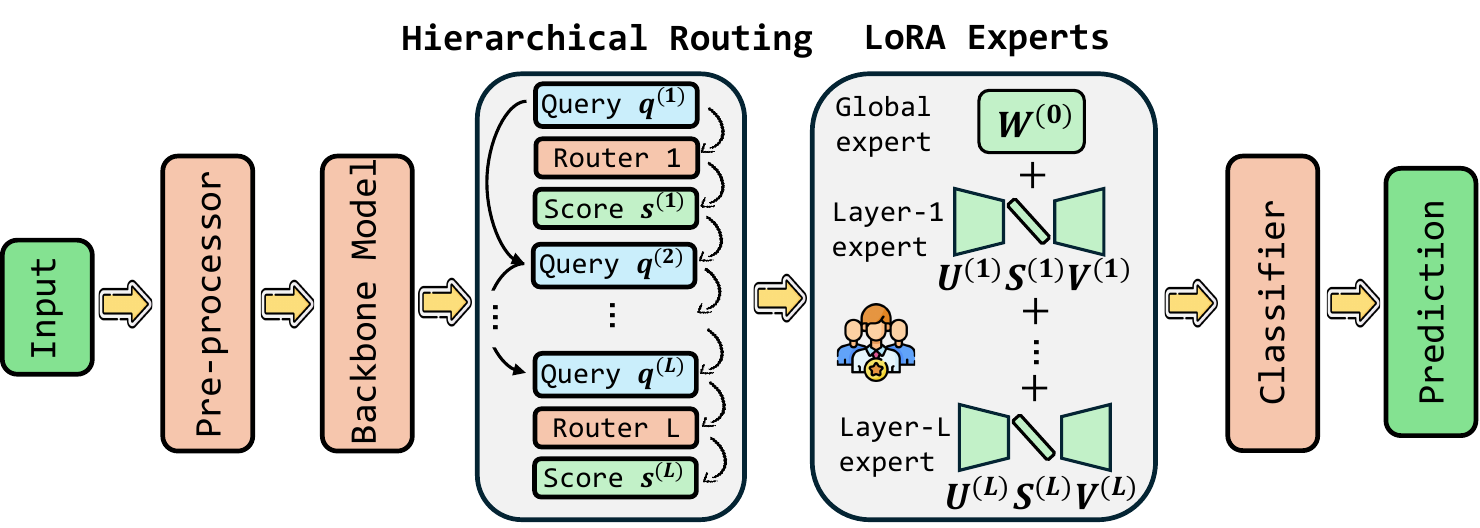}
    \vspace{-10pt}
    \caption{An overview of the proposed \algname\, which includes $L$ layers, each containing $K$ rank-1 experts.}
    \vspace{-10pt}
    \label{fig:model}
\end{figure}

Figure~\ref{fig:model} presents an overview of the proposed \algname\ framework, which is designed to achieve parameter-efficient and scalable modeling through a combination of \emph{LoRA experts} and a \emph{hierarchical routing mechanism}.

First, to enable efficient horizontal scaling, \algname\ employs LoRA experts, where each expert is implemented as a rank-1 low-rank approximation of a full-rank weight matrix.
This reduces the parameter complexity of each expert from quadratic to linear with respect to hidden size, making the model lightweight yet expressive.

Second, to facilitate efficient vertical scaling, \algname\ introduces a hierarchical routing mechanism that enables expert selection at each layer based on both the input query and the routing decisions from previous layers.
Unlike conventional layer stacking where each layer is computed sequentially, \algname\ decouples lightweight routing score computation from computationally heavy expert execution.
Specifically, we compute the lightweight routing scores layer-by-layer in a hierarchical fashion, but execute the heavy expert transformation computations \emph{in parallel} across all layers, leading to significant efficiency gains during inference.

Together, these two design choices make \algname\ a compelling solution for scalable CTR prediction models that balance expressiveness, efficiency, and modularity.

\subsection{Parameter-Efficient LoRA experts}\label{sec:lora}
We first introduce our parameter-efficient LoRA experts.
LoRA has been widely adopted in multi-domain settings, where a shared base weight serves as the foundation for capturing generalizable knowledge, while lightweight low-rank adapters specialize in domain-specific nuances.
The low-rank formulation not only improves efficiency by reusing the heavy base weights across different adapters, but also enhances robustness: the shared backbone provides stable, broadly applicable representations, while the low-rank updates enable fine-grained adaptation to each domain.

Given that CTR models often operate under tight FLOPs and latency budgets, rank-1 experts are adopted to achieve extreme parameter efficiency for high-frequency inference.
Specifically, for the $l$-th layer, each expert is the product of two vectors $\Wl_i=\uli{\vli}^\T,\forall i=1,2,...,K$, where $\uli,\vli\in\mathbb{R}^{d}$. Given a routing score $\bm{s}^{(l)}\in\Delta^{K-1}$, the $l$-th layer expert weight $\Wl$ is defined as
\begin{equation*}
    \Wl=\sum_{i=1}^K\bm{s}^{(l)}_i\Wl_i=\Ul\text{diag}\left(\bm{s}^{(l)}\right){\Vl},
\end{equation*}
where $\Ul=\left[\bm{u}_1^{(l)}\|\cdots\|\bm{u}_K^{(l)}\right]\in\mathbb{R}^{d\times K}, \Vl=\left[\bm{v}_1^{(l)}\|\cdots\|\bm{v}_K^{(l)}\right]^\T\in\mathbb{R}^{K\times d}$ are the concatenations of expert weights, and $\bm{W}^{(0)}$ is the shared expert weight. And the final transformation weight is the addition of the expert weights at each layer and a shared base weight $\bm{W}^{(0)}$, that is
\begin{equation}\label{eq:lora}
    \bm{W} = \bm{W}^{(0)} + \sum_{l=1}^L\Ul\text{diag}\left(\bm{s}^{(l)}\right){\Vl}.
\end{equation}

During inference, the expert weights $\Ul,\Vl$ at each layer remain fixed, and the final transformation weight is determined solely by the layer-wise routing scores $\bm{s}^{(l)}$.
As we will detail in the next section, the computation of routing scores is decoupled from the outputs of the MoE layers and instead depends only on the routing scores from preceding layers.
This decoupling enables us to pre-compute all routing scores in a lightweight forward pass.
Subsequently, the expert contributions across all layers can be aggregated and the sequence transformation applied in a single operation.
As a result, \emph{vertical scaling incurs no additional inference cost}, since the heavy expert computations across layers can be executed in parallel after routing is resolved.

\subsection{Hierarchical Routing Strategy}\label{sec:routing}
\begin{figure}[t]
    \centering
    \includegraphics[width=.9\linewidth]{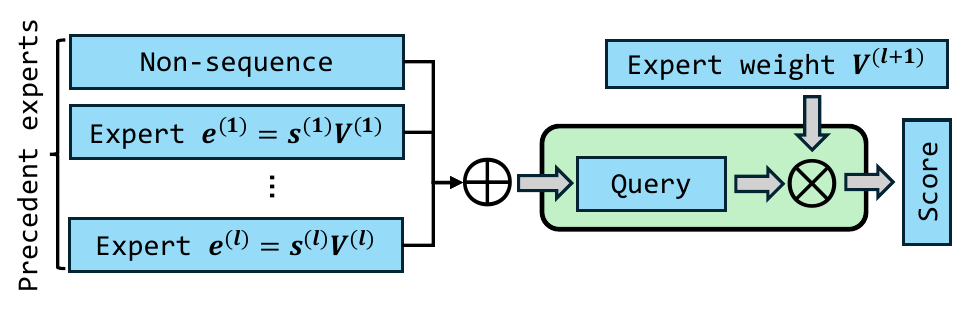}
    \vspace{-10pt}
    \caption{A hierarchical routing strategy selects the current experts based on selections from the previous layers.}
    \vspace{-10pt}
    \label{fig:router}
\end{figure}

Unlike conventional MoE frameworks that primarily scale in width by increasing the number of experts, \algname\ introduces a new paradigm that enables simultaneous depth–width scaling.
This design enables the model to benefit from deep hierarchical transformations (depth) and combinatorially diverse expert selections (width), enhancing expressiveness and scalability.

To this end, our router design is guided by the following three principles.
First (\emph{hierarchical routing}), The routing decision at each layer is conditioned on the routing outcomes from preceding layers, forming a hierarchical selection process. This structure encourages the model to extract information in a coarse-to-fine manner. For example, in CTR prediction tasks, earlier layers may focus on high-level attributes such as item categories, while later layers refine these decisions based on fine-grained features like item brand or user intent.
Second (\emph{parallel inference}), to ensure scalability without incurring additional inference cost, different MoE layers are designed to execute in parallel. 
Third (\emph{heterogeneous interaction}), CTR models often receive heterogeneous information, e.g., sequence and non-sequence information, and effective interaction among these features is at the core of the success of CTR prediction~\cite{zeng2024interformer}.

To meet these principles, we introduce the hierarchical routing design shown in Figure~\ref{fig:router}.
Specifically, non-sequence information is processed by a query projection matrix $\bm{W}_{\text{proj}}$ to extract a low-dimensional representation $\bm{x}\in\mathbb{R}^d$ as the initial query $\q^{(1)}$.

For $l$-th layer router, it takes the input query $\q^{(l)}$ as input and calculates the routing score by the inner product of input query and expert weights $\Vl$, that is
\begin{equation*}
    \bm{s}^{(l)}=\text{Softmax}\left(\bm{q}^{(l)}\Vl/\sqrt{d}\right).
\end{equation*}

To enable hierarchical routing, the key is to design a proper query $\q^{(l)}$ that incorporates expert information from previous layers.
For each layer, we compute the expert representation $\bm{e}^{(l)}$ using the expert weights $\Vl$ averaged by the routing score $\s^{(l)}$, i.e., $\bm{e}^{(l)} = \s^{(l)}{\Vl}$.
The input query $\q^{(l+1)}$ for next layer is further updated by the sum of previous query $\q^{(l+1)}$ and expert representation $\bm{e}^{(l)}$:
\begin{equation}\label{eq:query}
    \q^{(l+1)}=\q^{(l)}+\bm{e}^{(l)}=\bm{x} + \sum_{i=1}^{l}\bm{e}^{(i)}.
\end{equation}
Such query update and routing score computation bring two notable advantages.
For one thing, hierarchical routing enables structured, layer-dependent expert selection, which encourages combinatorially diverse expert compositions across layers and enhances model expressiveness.
For another, the routing score computation is decoupled from the actual expert transformation, allowing the full set of selected expert transformations to be executed simultaneously after routing scores are determined, which substantially reduces the computation overhead typically associated with vertical scaling.

\subsection{Principled Training Framework}\label{sec:train}
\begin{figure}[t]
    \centering
    \includegraphics[width=\linewidth]{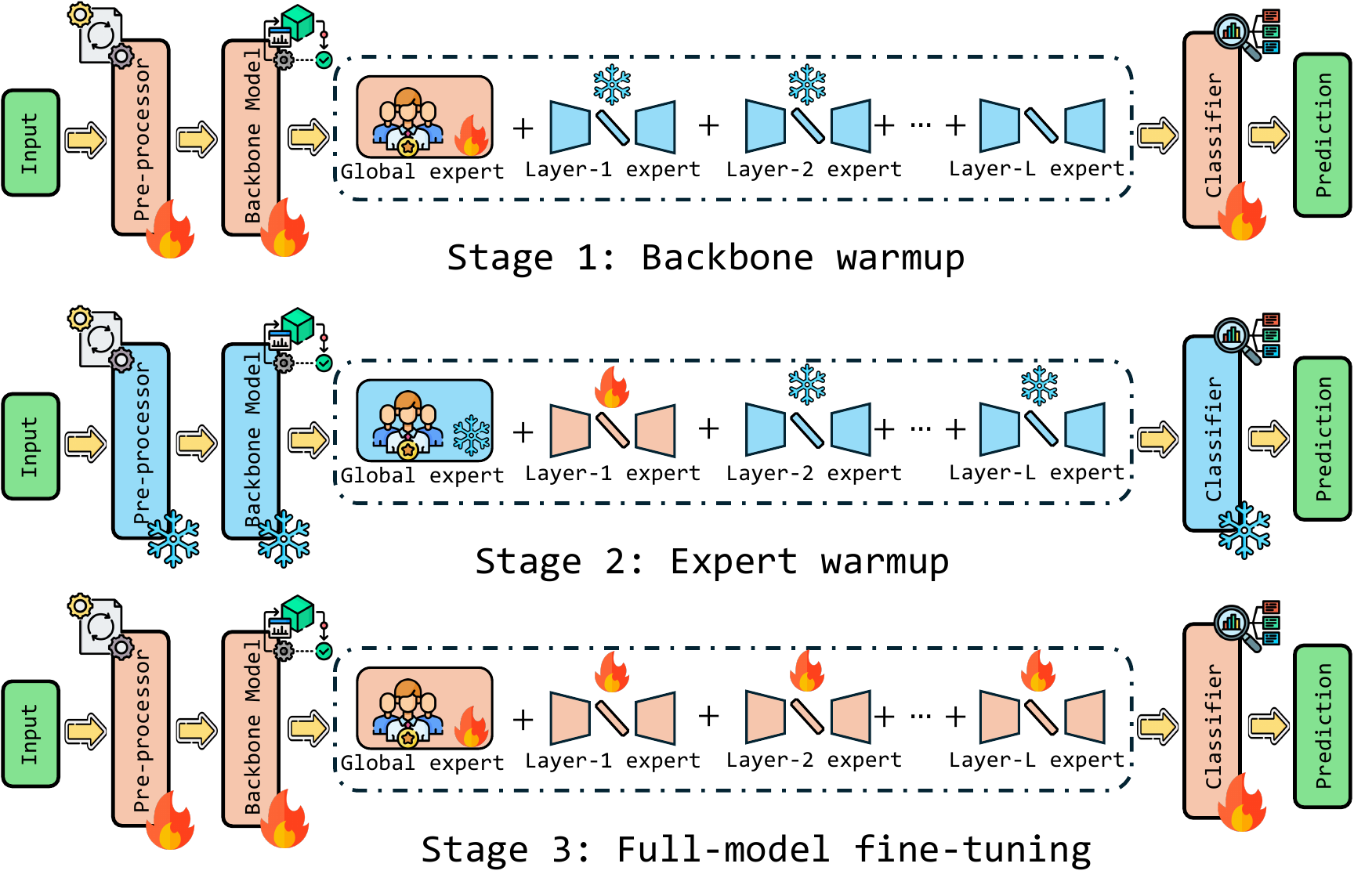}
    \vspace{-20pt}
    \caption{A 3-stage training framework for stable training.}
    \vspace{-15pt}
    \label{fig:train}
\end{figure}
Apart from model design, a principled training framework is essential to enable stable and effective model scaling.
We address this challenge from two aspects, including a progressive warmup strategy to stabilize training, and auxiliary losses to promote balanced and diverse expert utilization.

\paragraph{Model Warmup}
Directly training the full model can be unstable due to the intricate dependencies across layers and experts
To mitigate this, we adopt a 3-stage training pipeline shown in Figure~\ref{fig:train} to progressively build the model in a bottom-up fashion.

In the first stage (\emph{backbone warmup}), we train only the backbone components,including the pre-processor, backbone model, classifier, and the shared expert.
This stage is equivalent to training a \emph{non-MoE baseline model} and serves as a stable initialization for the subsequent stages.

In the second stage (\emph{expert warmup}), we sequentially activate and train the LoRA experts layer-by-layer, while freezing the backbone and previously activated experts.
To avoid interference from untrained experts, experts are initialized with zero-valued LoRA weights, i.e., $\Vl\sim\mathcal{N}(0,I),\Ul\leftarrow\bm{0}$.
This ensures that each layer is trained as an \emph{incremental refinement} over earlier representations, leading to more stable optimization and faster convergence.

In the final stage (\emph{full-model fine-tuning}), we fine-tune the entire model jointly, updating all parameters from the well-initialized starting point provided by the previous stages.
This final step ensures global consistency and maximizes model performance.

\paragraph{Auxiliary Losses}
Another critical challenge is to ensure balanced expert utilization.
Without appropriate load-balancing mechanisms, the router may consistently favor a small subset of experts, while leaving others underutilized.
This phenomenon, known as expert collapse~\cite{fedus2022switch,chi2022representation}, undermines the intended modularity of MoE by effectively reducing it to a single-expert non-MoE architecture, 
thereby limiting its capacity and scalability.

To address this issue, we adopt two auxiliary losses, including load-balancing loss $\ell_{lb}$~\cite{fedus2022switch} and z-loss $\ell_z$~\cite{zoph2022stmoe}, defined as follows
\begin{equation}\label{eq:aux}
    \begin{aligned}
        &\text{load-balancing loss}: \ell_{lb} = \frac{N}{B\cdot A} \sum_{b,k} \bm{s}(b,k) \cdot \mathbbm{1}_{b,k},\\
        &\text{z-loss}: \ell_z = \text{logsumexp}(\bm{s}),
    \end{aligned}
\end{equation}
where $B$ is the batch size, $A$ is the number of activated experts, and $\mathbbm{1}_{b,k}$ is the indicator function denoting whether expert $k$ is activated for sample $b$.
The load-balancing loss encourages a uniform distribution of routing scores by penalizing over-reliance on a subset of experts, promoting balanced expert utilization. 
The z-loss regularizes the scale of gating logits, preventing excessively large values that lead to numerical instability and degraded training dynamics.

Although the auxiliary losses in Eq.~\eqref{eq:aux} are designed to encourage diverse and balanced expert utilization, improper use may lead to conflicts with the main prediction objective, ultimately degrading model performance.
To mitigate this risk, we isolate the influence of the auxiliary losses by \emph{restricting their gradient flow}.
Specifically, we allow gradients from $\ell_{lb}$ and $\ell_z$ to update only the router parameters, while prohibiting their influence on other model components such as the backbone network and expert weights.
This targeted gradient routing ensures that auxiliary losses focus solely on improving routing quality, without interfering with the optimization of the primary prediction task.

\subsection{Analysis}
To better understand the benefits of \algname, we first provide analysis on model complexity as follows.
\begin{proposition}[Complexity Analysis]
    For an $L$-layer $K$-expert \algname\ operating on a $d$-dimensional sequence of length $N$, the space complexity is $\mathcal{O}(KLd)$ and the time complexity is $\mathcal{O}(Nd^2)$.
\end{proposition}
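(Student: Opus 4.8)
The plan is to prove the two bounds separately: a direct parameter count for the space bound, and a two-phase accounting (lightweight routing followed by parallel expert execution) for the time bound, leveraging the routing/execution decoupling established in Section~\ref{sec:routing}.

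For the space complexity, I would enumerate the components introduced by the MoE structure. Each layer $l$ stores the two factor matrices $\Ul\in\mathbb{R}^{d\times K}$ and $\Vl\in\mathbb{R}^{K\times d}$ appearing in Eq.~\eqref{eq:lora}, contributing $2Kd$ parameters; summing over the $L$ layers yields $\mathcal{O}(KLd)$. The per-layer router scores $\bm{s}^{(l)}\in\mathbb{R}^{K}$ and query vectors $\q^{(l)}\in\mathbb{R}^{d}$ add only $\mathcal{O}(L(K+d))$ intermediate state, which is dominated. I would note explicitly that the shared base weight $\bm{W}^{(0)}$ and the projection $\bm{W}_{\text{proj}}$ are backbone components of size $\mathcal{O}(d^2)$ shared with the non-MoE baseline, so that $\mathcal{O}(KLd)$ is precisely the \emph{additional} storage incurred by horizontal ($K$) and vertical ($L$) scaling, and it is linear rather than quadratic in $d$ thanks to the rank-1 factorization.

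For the time complexity, the key is to separate the one-time routing pass from the sequence transformation. First I would bound the routing pass: by Eq.~\eqref{eq:query} each layer computes $\bm{s}^{(l)}=\text{Softmax}(\q^{(l)}\Vl/\sqrt{d})$, the expert representation $\bm{e}^{(l)}=\bm{s}^{(l)}\Vl$, and the updated query, each costing $\mathcal{O}(Kd)$; because these act on a single $d$-dimensional query rather than the length-$N$ sequence, the entire (inherently sequential) routing pass costs only $\mathcal{O}(KLd)$. Next I would bound the expert execution: once all scores are fixed, the transformation of $\X\in\mathbb{R}^{d\times N}$ factors, via Eq.~\eqref{eq:lora}, as $\bm{W}\X=\bm{W}^{(0)}\X+\sum_{l=1}^{L}\Ul\,\text{diag}(\bm{s}^{(l)})\,\Vl\X$. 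The base term costs $\mathcal{O}(Nd^2)$, and applying each low-rank update right-to-left without materializing any $d\times d$ matrix costs $\mathcal{O}(KdN)$ per layer ($\Vl\X$, then the diagonal scaling, then left-multiplication by $\Ul$), for $\mathcal{O}(KLdN)$ across all layers. The total is therefore $\mathcal{O}(Nd^2+KLdN+KLd)=\mathcal{O}(Nd^2+KLdN)$.

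The main obstacle — and the step I would flag most carefully — is collapsing this to the stated $\mathcal{O}(Nd^2)$, since the $K$ and $L$ factors do not disappear unconditionally. I would invoke the low-rank design regime that motivates the architecture, namely that the aggregate expert rank does not exceed the hidden width, $KL=\mathcal{O}(d)$ (equivalently, $K$ and $L$ are fixed architectural hyperparameters independent of $N$ and $d$); then $KLdN=\mathcal{O}(Nd^2)$ and the entire low-rank contribution is subsumed by the base transformation, while the routing pass $\mathcal{O}(KLd)$ is trivially dominated. I would close by emphasizing what this accounting reveals: the routing/execution decoupling turns the naive sequential depth-$L$ cost of $\mathcal{O}(L\cdot Nd^2)$ into a subdominant additive $\mathcal{O}(KLdN)$ term, so that under the low-rank regime vertical and horizontal scaling are effectively free at inference, matching the $\mathcal{O}(Nd^2)$ cost of a single non-MoE layer.
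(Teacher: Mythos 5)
Your proof is correct and follows essentially the same reasoning the paper gives, though the paper never formalizes it: the proposition is justified only by the informal discussion after its statement (decoupled lightweight routing plus fused parallel expert execution), and only Proposition~\ref{prop:scale} receives a proof in the appendix. Your accounting is a faithful rigorization of that argument; the only cosmetic difference is that you apply the low-rank factors to the sequence right-to-left (an extra $\mathcal{O}(KLdN)$) whereas the paper's ``fused into a single transformation'' phrasing suggests materializing $\bm{W}$ first (an extra $\mathcal{O}(KLd^2)$) --- either way the term is subsumed by $\mathcal{O}(Nd^2)$ once $K$ and $L$ are treated as fixed architectural constants, a caveat you are right to make explicit since the paper leaves it implicit.
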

For space complexity, \algname\ scales linearly with respect to the number of experts $K$, the number of layers $L$ and the feature dimension $d$.
This represents a significant improvement over conventional MoE frameworks~\cite{fedus2022switch,zoph2022stmoe,lepikhin2020gshard}, which typically incur quadratic parameter costs in $d$, as well as over LoRA-based MoEs~\cite{wu2024mixture,tian2024hydralora}, which introduce an additional scaling factor proportional to the LoRA rank $r$.
For time complexity, \algname\ scales linearly with the sequence length $N$ and quadratically with the feature dimension $d$, matching the computational profile of standard Transformer models.
Importantly, due to the parallel design of \algname, vertical scaling via additional MoE layers does not increase the time complexity.
This is because the computationally expensive expert transformations are executed in parallel and fused into a single transformation step after routing scores are precomputed.
In contrast, conventional vertical scaling requires sequential layer-wise computation, resulting in time complexity that grows linearly with the number of layers $L$.
Thus, \algname\ enables expressive deep scaling without incurring additional inference cost.

Besides, we analyze how the number of expert combinations scale as the the number of experts per layer $K$, the number of \algname\ layers $L$, and the number of activated experts $A$ increase.
\begin{restatable}[Model Scaling]{proposition}{scale}\label{prop:scale}
    For an $L$-layer $K$-expert \algname\ with Top-$A$ expert activation, the number of possible expert combinations is $\binom{K}{A}^L$. Specifically:
    \begin{itemize}
        \item {\ul Layer scaling}: When $K,A$ are fixed with $1\leq A < K$, expert complexity grows exponentially w.r.t. $L$.
        \item {\ul Expert scaling}: When $A,L$ are fixed, expert complexity is $\Theta(K^{AL})$, which is polynomial w.r.t. $K$.
        \item {\ul Activation scaling}: When $K,L$ are fixed, expert complexity is $\mathcal{O}\left(\left(\frac{eK}{A}\right)^{AL}\right)$, which is sublinear w.r.t. $A$.
    \end{itemize}
\end{restatable}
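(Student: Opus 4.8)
The plan is to treat the claim as a pure counting statement about the space of realizable routing paths and then bound the resulting quantity $\binom{K}{A}^L$ in the three asymptotic regimes. First I would pin down the combinatorial object being counted: a single forward pass selects, at each of the $L$ layers, a Top-$A$ subset of the $K$ experts, so an expert \emph{combination} is a sequence $(\mathcal{S}^{(1)},\dots,\mathcal{S}^{(L)})$ with each $\mathcal{S}^{(l)}\subseteq\{1,\dots,K\}$ and $|\mathcal{S}^{(l)}|=A$. Each layer independently admits $\binom{K}{A}$ such subsets, and since the choices across the $L$ layers range over a product set, the total number of distinct combinations is $\prod_{l=1}^{L}\binom{K}{A}=\binom{K}{A}^L$, which establishes the main formula.

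For the layer-scaling case, I would set $c=\binom{K}{A}$ and observe that the constraint $1\le A<K$ forces $K\ge 2$ and, by symmetry and unimodality of the binomial coefficients, $c\ge\binom{K}{1}=K\ge 2$. Hence the count $c^L$ has the form $(\text{constant}\ge 2)^L$, i.e., it grows exponentially in $L$ with $K,A$ held fixed.

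For the expert-scaling case, I would write the binomial coefficient in falling-factorial form, $\binom{K}{A}=\frac{K(K-1)\cdots(K-A+1)}{A!}$, which for fixed $A$ is a degree-$A$ polynomial in $K$ with leading coefficient $1/A!$; therefore $\binom{K}{A}=\Theta(K^A)$, giving both an upper and a lower bound of the same order. Raising to the fixed power $L$ yields $\binom{K}{A}^L=\Theta\!\left(K^{AL}\right)$, a polynomial of degree $AL$ in $K$. For the activation-scaling case, I would invoke the standard estimate $\binom{K}{A}\le(eK/A)^A$ and raise it to the $L$-th power to obtain $\binom{K}{A}^L\le(eK/A)^{AL}=\mathcal{O}\!\left((eK/A)^{AL}\right)$; the decreasing base $eK/A$ is what dampens the growth in $A$ relative to the naive $K^{AL}$ count.

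The main obstacle is conceptual rather than computational: I must argue that the hierarchical, score-conditioned router does not shrink this count. Even though, for any single input, the layer-$l$ selection is a deterministic function of the earlier routing scores, the quantity counted above is the full space of compositions realizable across all inputs; since each layer's router can in principle produce any of its $\binom{K}{A}$ subsets for some query, no factor in the product is lost, and $\binom{K}{A}^L$ is the correct (and tight) cardinality of the expressive path space. Once this point is settled, the three regimes follow immediately from the elementary binomial estimates above, so I would keep those derivations terse.
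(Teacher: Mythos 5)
Your proposal is correct and follows essentially the same route as the paper's proof: count $\binom{K}{A}$ subsets per layer, take the $L$-fold product, and then apply the standard binomial estimates $\left(\frac{K}{A}\right)^A \leq \binom{K}{A} \leq \left(\frac{eK}{A}\right)^A$ in the three regimes. The only cosmetic differences are that you derive $\binom{K}{A}=\Theta(K^A)$ via the falling-factorial form rather than the sandwich bound, and you add a (reasonable) remark that the score-conditioned router does not shrink the realizable path space, a point the paper leaves implicit.
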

Proposition~\ref{prop:scale} reveals the expressive power of \algname, demonstrating that even with modest expert counts and sparse activation, hierarchical routing can yield exponentially large and diverse expert paths, which is crucial for balancing efficiency and capacity.
\section{Experiments}\label{sec:exp}
We carry out extensive experiments to answer the following research questions:
\begin{itemize}[noitemsep, topsep=0pt]
    \item\textbf{RQ1:} How effective is the proposed \algname\ on benchmark CTR datasets? (Section~\ref{sec:bench})
    \item\textbf{RQ2:} How does model performance scale with \algname\ in horinzontal and vertical directions? (Section~\ref{sec:scale})
    \item\textbf{RQ3:} To what extent does \algname\ improve the performance-efficiency tradeoff? (Section~\ref{sec:tradeoff})
    \item\textbf{RQ4:} To what extent does \algname\ benefit from different model and training framework designs? (Section~\ref{sec:study})
\end{itemize}

\subsection{Experiment Setup}\label{sec:setup}
\paragraph{Dataset} 
We evaluate on three public benchmark datasets, including AmazonElectronics\cite{he2016ups}, TaobaoAd\cite{Tianchi}, KuaiVideo\cite{li2019routing}, and MicroVideo~\cite{chen2018temporal}, which are summarized in Appendix~\ref{app:pipeline}.

\paragraph{Baseline Methods}
We consider 5 state-of-the-art Transformer-based CTR models as the backbone model, including DIN~\cite{zhou2018deep}, DIEN~\cite{zhou2019deep}, BST~\cite{chen2019behavior}, TransAct~\cite{xia2023transact} and InterFormer~\cite{zeng2024interformer}.
Besides, we compare \algname\ with 4 MoE and LoRA baselines including Non-MoE, Switch Transformer~\cite{fedus2022switch}, MoLE~\cite{wu2024mixture} and HydraLoRA~\cite{tian2024hydralora}.

\paragraph{Metrics}
We adopt three metrics to evaluate the model performance and efficiency, including:
\begin{itemize}[leftmargin=*]
    \item \texttt{AUC} provides an aggregated measure of model capacity in correctly classifying positive and negative samples across all thresholds. Higher the better.
    \item \texttt{LogLoss} (cross-entropy loss) measures the distance between the prediction $\hat{y}$ and the label $y$, and is computed as $L(y,\hat{y})\!=\!-\left(y\log\left(\hat{y}\right)+(1\!-\!y)\log\left(1\!-\!\hat{y}\right)\right)$. Lower the better.
    \item \texttt{\#Params} provides the number of parameters in the MoE module.
\end{itemize}

\subsection{Benchmark Results}\label{sec:bench}
To evaluate the effectiveness of \algname, we compare its performance against state-of-the-art MoE methods on various Transformer-based CTR models across multiple benchmark datasets.
For fair comparison, we adopt consistent model configurations on different MoE methods.
We report the results in Table~\ref{tab:bench}, from which we draw the following observations:

\begin{table*}[htbp]
\small
\caption{Benchmark results. We report AUC ($\uparrow$), LogLoss ($\downarrow$), and the parameter count of the MoE FFN layer ($\downarrow$). Blue and red cells indicate the \textcolor{blue}{best} and \textcolor{red}{second-best} performance, respectively (we exclude Non-MoE baseline for \#Params comparison as it is inherently the most parameter-efficient). For fair comparison, all non-MoE and MoE variants adopt the same configuration.}
\label{tab:bench}
\vspace{-10pt}
\begin{tabular}{@{}ll|ccl|ccl|ccl|ccl@{}}
\toprule
\multicolumn{2}{c|}{\textbf{Dataset}}              & \multicolumn{3}{c|}{\textbf{TaobaoAd}}                 & \multicolumn{3}{c|}{\textbf{AmazonElectronics}}        & \multicolumn{3}{c|}{\textbf{KuaiVideo}}                & \multicolumn{3}{c}{\textbf{MicroVideo}}               \\ \midrule
\multicolumn{2}{c|}{\textbf{Model + MoE}}          & \textbf{AUC}             & \textbf{LogLoss}         & \textbf{\#Params} & \textbf{AUC}             & \textbf{LogLoss}         & \textbf{\#Params} & \textbf{AUC}             & \textbf{LogLoss}         & \textbf{\#Params} & \textbf{AUC}             & \textbf{LogLoss}         & \textbf{\#Params} \\ \midrule
\multirow{5}{*}{\rotatebox{90}{BST}}         & Non-MoE   & 0.6486          & 0.1937          & 7.35K    & {\redcell 0.8683}    & {\redcell 0.4571}    & 24.83K   & 0.7457         & 0.4353                & 10.58K   & 0.7253          & 0.4171          & 2.13K    \\
                             & Switch    & {\redcell 0.6501}    & {\redcell 0.1935}    & 35.61K   & 0.8671          & 0.4632          & 107.78K  & 0.7458          & 0.4347          & 46.46K   & {\bluecell 0.7260} & {\redcell 0.4168}    & 12.67K   \\
                             & MoLE      & 0.6494          & 0.1937          & 26.40K   & 0.8662          & 0.4729          & 58.63K   & {\redcell 0.7462}    & {\bluecell 0.4341} & 48.00K   & {\redcell 0.7255}    & 0.4171          & 14.21K   \\
                             & HydraLoRA & 0.6484          & 0.1938          & {\redcell 16.15K}   & 0.8668          & 0.4622          & {\redcell 42.24K}   & 0.7451          & 0.4346          & {\redcell 26.50K}   & 0.7238          & 0.4191          & {\redcell 9.09K}    \\
                             & HiLoMoE   & {\bluecell 0.6505} & {\bluecell 0.1932} & {\bluecell 11.90K}   & {\bluecell 0.8699} & {\bluecell 0.4562} & {\bluecell 35.39K}   & {\bluecell 0.7463} & {\redcell 0.4343}    & {\bluecell 17.71K}   & 0.7254          & {\bluecell 0.4162} & {\bluecell 6.96K}    \\ \midrule
\multirow{5}{*}{\rotatebox{90}{DIN}}         & Non-MoE   & 0.6468          & 0.1931          & 6.27K    & 0.8762          & 0.4487          & 2.11K    & 0.7431          & 0.4386          & 4.13K    & 0.7262          & 0.4156          & 8.32K    \\
                             & Switch    & {\redcell 0.6475}    & {\redcell 0.1929}    & 31.49K   & 0.8780          & 0.4427          & 10.63K   & 0.7431          & 0.4392          & 20.68K   & {\redcell 0.7267}    & {\bluecell 0.4149} & 41.73K   \\
                             & MoLE      & {\bluecell 0.6478} & {\redcell 0.1929}    & 23.67K   & 0.8760          & 0.4440          & 10.59K   & {\redcell 0.7433}    & {\redcell 0.4374}    & 22.22K   & 0.7263          & {\redcell 0.4150}    & 29.31K   \\
                             & HydraLoRA & 0.6472          & 0.1930          & {\redcell 15.71K}   & {\bluecell 0.8795} & {\bluecell 0.4408} & {\redcell 5.95K}    & 0.7432          & {\bluecell 0.4367} & {\redcell 10.92K}   & 0.7266          & 0.4155          & {\redcell 20.06K}   \\
                             & HiLoMoE   & {\redcell 0.6475}    & {\bluecell 0.1928} & {\bluecell 13.35K}   & {\redcell 0.8789}    & {\redcell 0.4423}    & {\bluecell 4.68K}    & {\bluecell 0.7434} & 0.4380          & {\bluecell 9.16K}    & {\bluecell 0.7271} & 0.4152          & {\bluecell 17.54K}   \\ \midrule
\multirow{5}{*}{\rotatebox{90}{DIEN}}        & Non-MoE   & 0.6508          & 0.1930          & 6.27K    & 0.8803          & 0.4395          & 1.09K    & 0.7440          & 0.4366          & 4.13K    & 0.7220          & 0.4204          & 2.08K    \\
                             & Switch    & 0.6508          & 0.1926          & 31.49K   & 0.8799          & 0.4457          & 5.51K    & 0.7444    & {\redcell 0.4344}          & 20.68K   & 0.7233          & 0.4193          & 10.44K   \\
                             & MoLE      & {\redcell 0.6511}    & {\bluecell 0.1925} & 23.67K   & 0.8793          & 0.4421          & 7.00K    & {\bluecell 0.7446}          & 0.4351    & 22.22K   & {\redcell 0.7234}    & {\redcell 0.4190}    & 11.98K   \\
                             & HydraLoRA & 0.6510          & 0.1927          & {\redcell 15.71K}   & {\redcell 0.8804}    & {\redcell 0.4393}    & {\redcell 3.64K}    & 0.7436      & 0.4374   & {\redcell 10.92K}   & 0.7231          & {\bluecell 0.4185} & {\redcell 5.80K}    \\
                             & HiLoMoE   & {\bluecell 0.6513} & {\bluecell 0.1925} & {\bluecell 13.35K}   & {\bluecell 0.8806} & {\bluecell 0.4390} & {\bluecell 2.54K}    & {\bluecell 0.7446} & {\bluecell 0.4341} & {\bluecell 9.16K}    & {\bluecell 0.7235}  & 0.4192          & {\bluecell 4.68K}    \\ \midrule
\multirow{5}{*}{\rotatebox{90}{TransAct}}    & Non-MoE   & 0.6488          & 0.1933          & 24.83K   & 0.8842          & 0.4366          & 33.09K   & 0.7446          & 0.4348          & 33.31K   & 0.7266          & 0.4151          & 16.67K   \\
                             & Switch    & 0.6486          & {\redcell 0.1927}    & 105.73K  & 0.8828          & 0.4393          & 140.80K  & {\redcell 0.7472}    & 0.4343          & 141.57K  & 0.7265          & 0.4157          & 70.91K   \\
                             & MoLE      & 0.6485          & 0.1929          & 56.58K   & 0.8824          & 0.4377          & 73.22K   & 0.7469          & {\redcell 0.4333}    & 95.49K   & 0.7254          & 0.4165          & 49.41K   \\
                             & HydraLoRA & {\redcell 0.6491}    & {\redcell 0.1927}    & {\redcell 40.19K}   & {\redcell 0.8851}    & {\bluecell 0.4340} & {\redcell 52.74K}   & 0.7470          & 0.4349          & {\redcell 60.67K}   & {\redcell 0.7280}    & {\redcell 0.4137}    & {\redcell 30.98K}   \\
                             & HiLoMoE   & {\bluecell 0.6495} & {\bluecell 0.1926} & {\bluecell 33.34K}   & {\bluecell 0.8860} & {\redcell 0.4351}    & {\bluecell 44.22K}   & {\bluecell 0.7473} & {\bluecell 0.4331} & {\bluecell 46.43K}   & {\bluecell 0.7281} & {\bluecell 0.4130} & {\bluecell 23.39K}   \\ \midrule
\multirow{5}{*}{\rotatebox{90}{InterFormer}} & Non-MoE   & 0.6515          & 0.1926          & 1.07K    & 0.8829          & 0.4297          & 2.11K    & 0.7421          & 0.4369          & 2.13K    & 0.7165          & 0.4173          & 2.13K    \\
                             & Switch    & 0.6513          & 0.1926          & 6.40K    & {\redcell 0.8854}    & 0.4299          & 12.67K   & {\bluecell 0.7451} & {\bluecell 0.4351} & 12.67K   & 0.7167          & 0.4169          & 12.67K   \\
                             & MoLE      & {\redcell 0.6523}    & {\redcell 0.1925}    & 7.94K    & 0.8827          & 0.4325          & 12.68K   & 0.7430          & 0.4361          & 14.21K   & 0.7167          & 0.4176          & 14.21K   \\
                             & HydraLoRA & 0.6521          & {\redcell 0.1925}    & {\redcell 4.86K}    & 0.8845          & {\redcell 0.4296}    & {\redcell 8.58K}    & 0.7434          & 0.4366          & {\redcell 9.09K}    & {\redcell 0.7169}    & {\redcell 0.4166}    & {\redcell 9.09K}    \\
                             & HiLoMoE   & {\bluecell 0.6541} & {\bluecell 0.1924} & {\bluecell 3.57K}    & {\bluecell 0.8874} & {\bluecell 0.4279} & {\bluecell 6.82K}    & {\redcell 0.7450}    & {\redcell 0.4359}    & {\bluecell 6.96K}    & {\bluecell 0.7178} & {\bluecell 0.4161} & {\bluecell 6.96K}    \\ \bottomrule
\end{tabular}
\end{table*}

\textbf{(1) \algname\ achieves state‑of‑the‑art performance across all models and datasets.}
Compared to the non‑MoE baseline, \algname\ attains the highest AUC in 16 out of 20 cases and the second highest in three additional cases. For LogLoss, it ranks best in 13 cases and second-best in four.
These results show that \algname\ consistently enhances prediction performance with minimal additional complexity, validating its robustness on diverse settings.

\textbf{(2) Compared to the non-MoE baseline, \algname\ achieves consistent gains in prediction quality while maintaining compact model size.}
Across all CTR models and datasets, \algname\ achieves an average improvement of 0.15\% on AUC and 0.13\% on LogLoss.
This performance improvement is achieved with only a modest increase of 6.17K parameters on average, which includes the low-rank expert weights and router components.
These findings highlight \algname's ability to provide meaningful performance improvements with limited overhead, making it well-suited for deployment in resource-constrained environments.

\textbf{(3) Relative to existing state-of-the-art MoE baselines (Switch Transformer~\cite{fedus2022switch}, MoLE~\cite{wu2024mixture}, HydraLoRA~\cite{tian2024hydralora}), \algname\ consistently shows superior efficiency and effectiveness.}
On average, it improves AUC by 0.08\% and reduces LogLoss by 0.10\% compared to the best competing MoE variant (HydraLoRA).
At the same time, \algname\ reduces parameter count by an average of 4.04K, which is equivalent to a 21.0\% reduction relative to the most parameter-efficient MoE competitor (HydraLoRA).
The improved accuracy can be attributed to \algname's combinatorially diverse expert compositions enabled by hierarchical routing, while the efficiency gains stem from its low-rank expert formulation.
These results show that \algname\ offers a much more efficient MoE configuration with on par or even enhanced accuracy.

\subsection{Model Scaling}\label{sec:scale}
Model scaling, improving performance as capacity grows with minimal saturation, is a key property for CTR models. 
We evaluate \algname\ under horizontal (\#experts) and vertical (\#layers) scaling.

\subsubsection{Horizontal Scaling}
\begin{figure*}[htbp]
    \centering
    \begin{subfigure}{.48\textwidth}
        \centering
        \includegraphics[width=\textwidth, trim=10 0 10 0, clip]{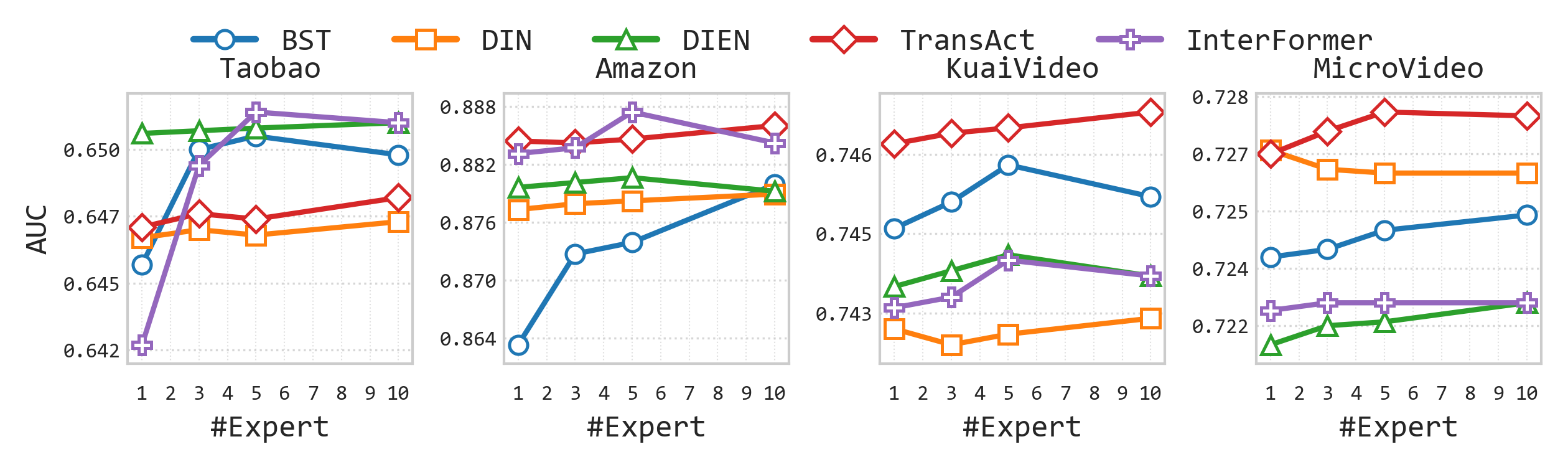}
        \vspace{-15pt}
        \caption{Horizontal scaling on AUC.}
        \label{fig:expert_auc}
    \end{subfigure}
    \hfill
    \begin{subfigure}{.48\textwidth}
        \centering
        \includegraphics[width=\textwidth, trim=10 0 10 0, clip]{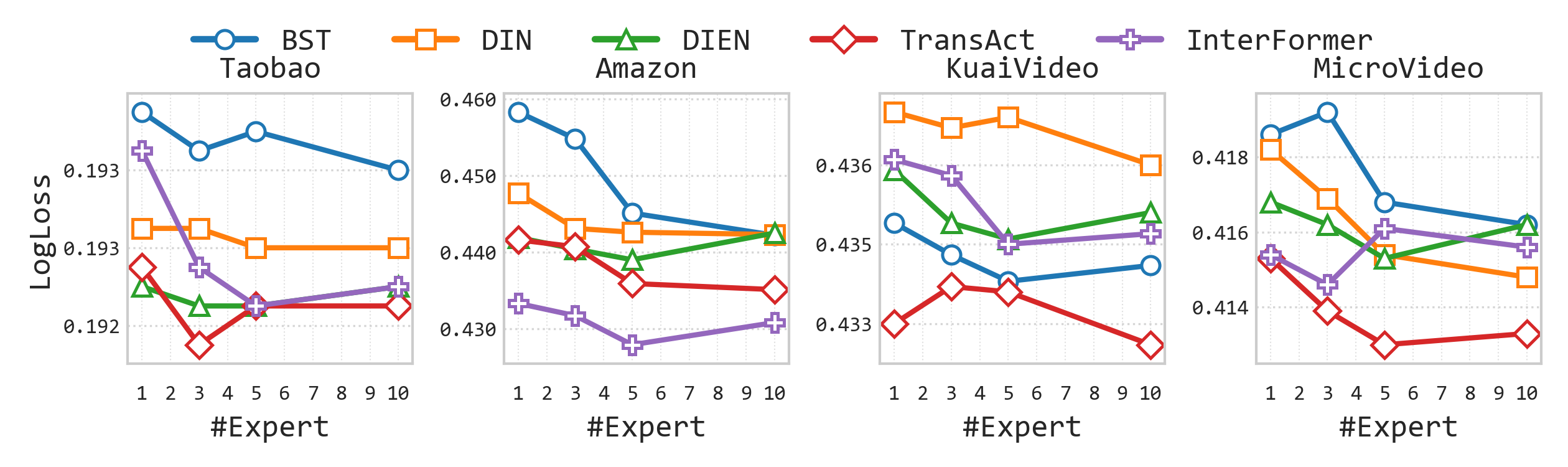}
        \vspace{-15pt}
        \caption{Horizontal scaling on LogLoss.}
        \label{fig:expert_logloss}
    \end{subfigure}
    \vspace{-10pt}
    \caption{Horizontal scaling on \#experts.}
    \vspace{-10pt}
    \label{fig:expert}
\end{figure*}

We first evaluate the horizontal scaling by varying the number of experts in \{1,3,5,10\}.
As shown in Figure~\ref{fig:expert}, \textbf{increasing experts generally improves AUC and reduces LogLoss, confirming the effectiveness of horizontal scaling.}

Across the majority of models and datasets, we observe a monotonic or near-monotonic gain in AUC as the number of experts increases.
Similarly, we observe a steady decline in LogLoss as the number of experts increases, indicating not only improved ranking performance but also better-calibrated probability estimates.
This trend shows enhanced model expressiveness brought by expert diversity, leading to more accurate prediction.
For example, increasing the number of experts within a moderate range (e.g., 1 to 5) results in a clear AUC lift and LogLoss decline.
When more experts are involved, the performance either continues to improve or plateaus afterward, suggesting that a small number of experts is already sufficient to bring meaningful gains, especially when paired with efficient routing and training strategies.
In addition, we observe no significant degradation when more experts are added, demonstrating the robustness against overparameterization.

\subsubsection{Vertical Scaling}
\begin{figure*}[htbp]
    \centering
    \begin{subfigure}{.48\textwidth}
        \centering
        \includegraphics[width=\textwidth, trim=10 0 10 0, clip]{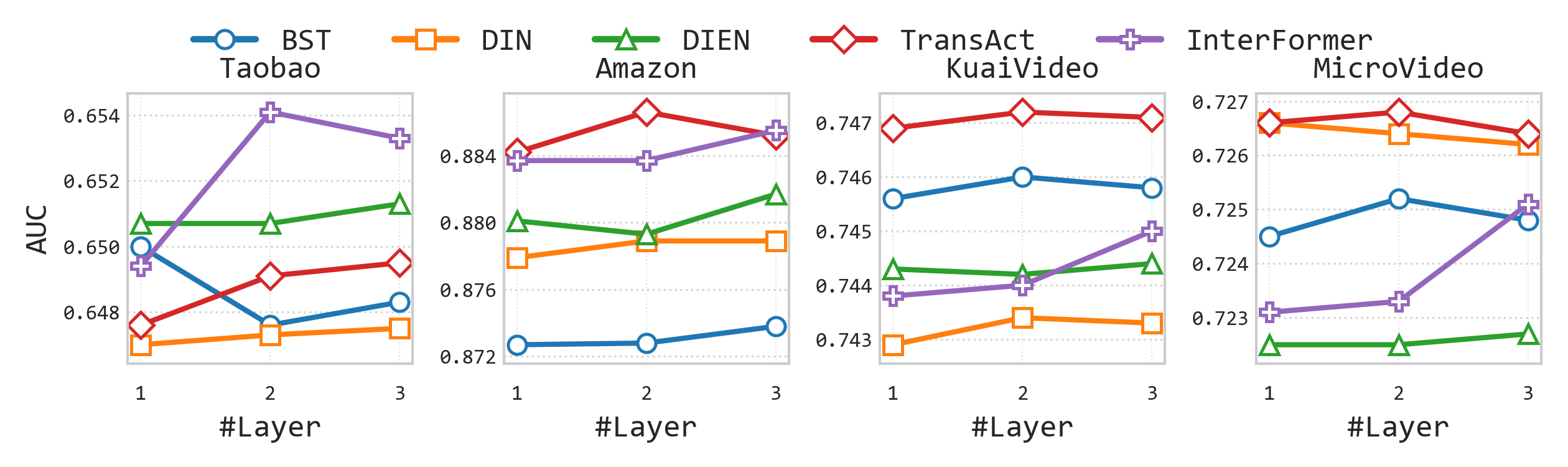}
        \vspace{-15pt}
        \caption{Vertical scaling on AUC.}
        \label{fig:layer_auc}
    \end{subfigure}
    \hfill
    \begin{subfigure}{.48\textwidth}
        \centering
        \includegraphics[width=\textwidth, trim=10 0 10 0, clip]{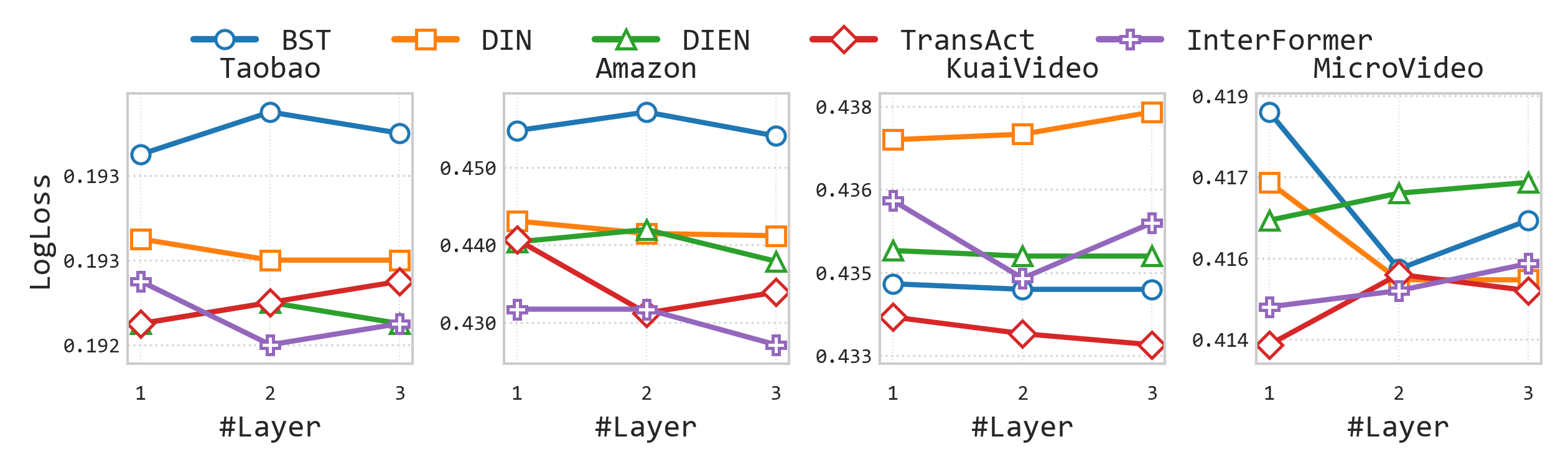}
        \vspace{-15pt}
        \caption{Vertical scaling on LogLoss.}
        \label{fig:layer_logloss}
    \end{subfigure}
    \vspace{-10pt}
    \caption{Vertical scaling on \#layers.}
    \vspace{-5pt}
    \label{fig:layer}
\end{figure*}

We also evaluate the vertical scaling of \algname\ by varying the number of MoE layers in \{1,2,3\}.
The results in Figure~\ref{fig:layer} suggest that \textbf{modest vertical scaling leads to incremental improvements in AUC and LogLoss, though the gains are generally smaller compared to horizontal scaling.}

Across most models and datasets, increasing the number of MoE layers from one to two consistently leads to noticeable performance gains, confirming that deeper expert composition can enhance representational capacity.
While gains may taper or fluctuate slightly beyond two layers, this behavior reflects the need for architectural balance, neither too deep nor too shallow, rather than an inherent limitation of depth.
Overall, these results suggest that vertical scaling is a promising and complementary direction within the broader \algname\ framework, enabling the construction of more expressive and efficient CTR models when combined with expert-level scaling.

Among the datasets, MicroVideo exhibits clearer vertical scaling benefits, especially in models like InterFormer and TransAct.
This may be attributed to its long behavioral sequences and relatively compact feature space, which allow deeper transformations to be effectively stacked without overfitting.
In contrast, on datasets such as TaobaoAd and KuaiVideo, the benefits of adding additional layers are marginal, possibly due to their shorter sequences or simpler patterns, which may not require deep hierarchical modeling.

\textit{Remark.} We observe that vertical scaling yields limited or even negative gains compared to horizontal scaling. This is because stacking more \algname\ layers increases dependency depth and routing uncertainty, leading to redundant transformations and optimization instability.
In contrast, horizontal scaling expands expert diversity at a fixed depth, aligning better with the heterogeneous yet shallow nature of user–item interactions in CTR tasks.

\subsection{Performance-Efficiency Tradeoff}\label{sec:tradeoff}
\begin{figure}[t]
    \centering
    \includegraphics[width=\linewidth]{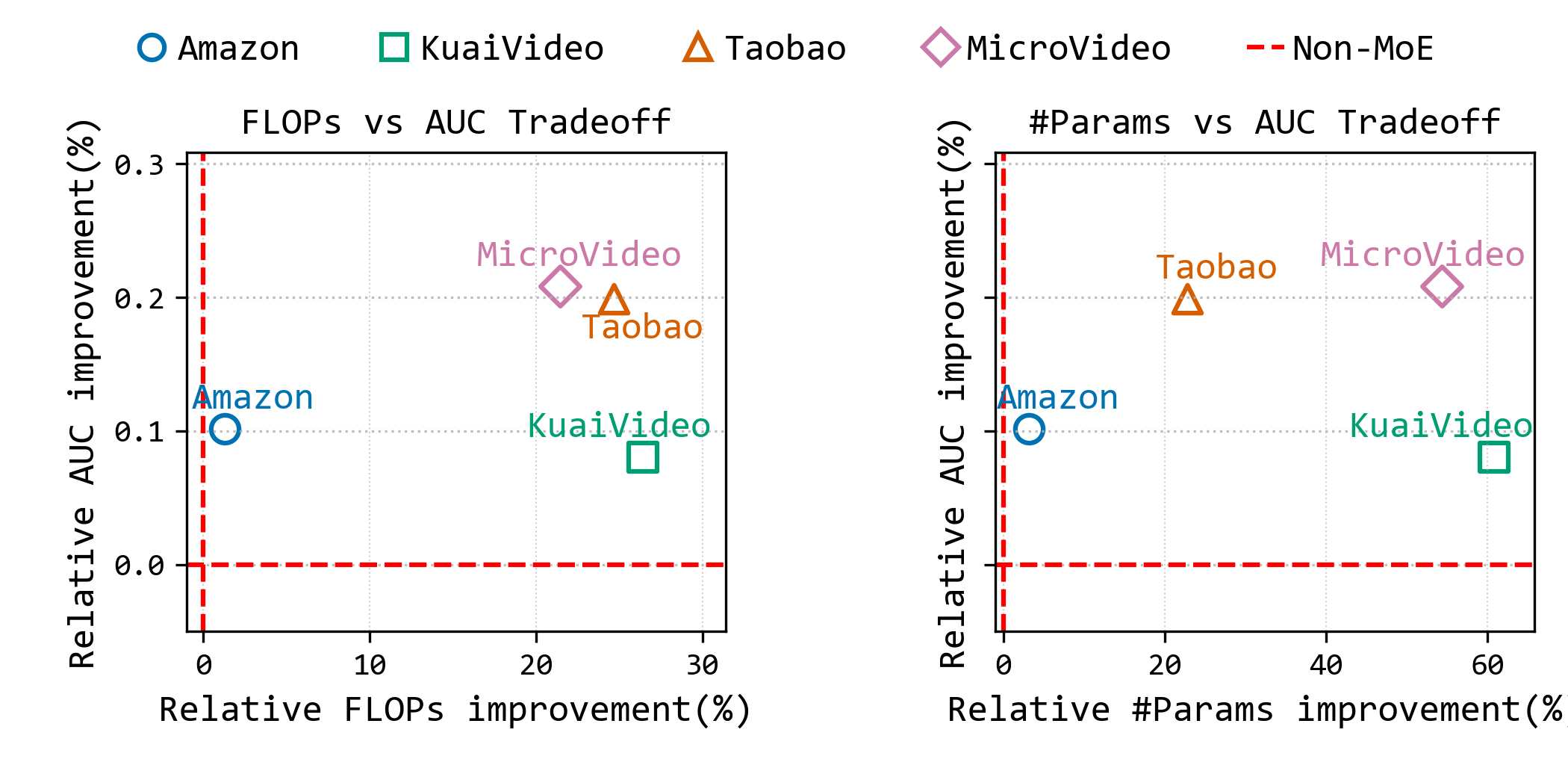}
    \vspace{-10pt}
    \caption{Performance-Efficiency Tradeoff. We report the relative improvements in AUC and FLOPs achieved by \algname\ on the InterFormer model, compared to its non-MoE counterpart. To ensure a fair comparison under optimal conditions, the baseline non-MoE model is scaled up to a larger configuration to reach the best AUC.}
    \vspace{-15pt}
    \label{fig:tradeoff}
\end{figure}

\begin{figure*}[t]
    \centering
    \begin{subfigure}{.24\textwidth}
        \centering
        \includegraphics[width=\textwidth, trim=10 0 10 0, clip]{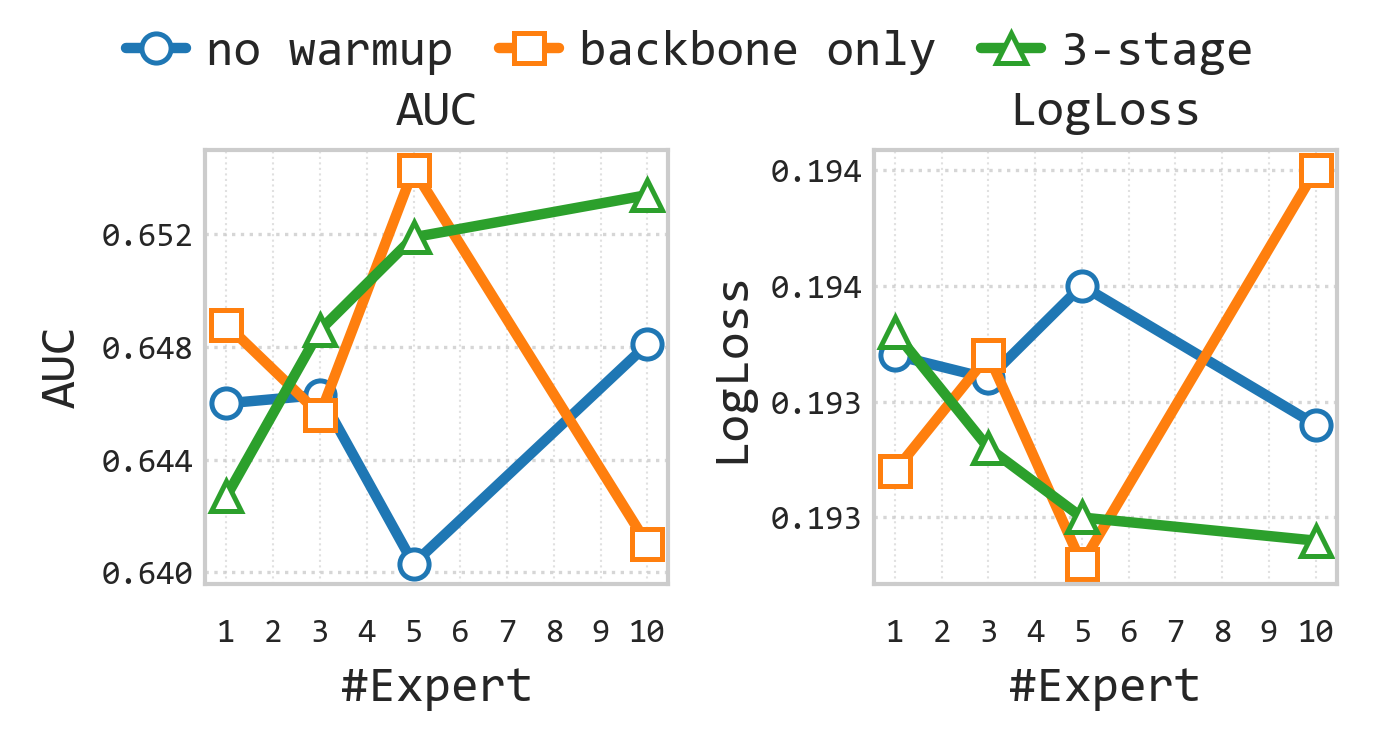}
        \vspace{-15pt}
        \caption{Study on model warmup.}
        \label{fig:exp-warm}
    \end{subfigure}
    \hfill
    \begin{subfigure}{.24\textwidth}
        \centering
        \includegraphics[width=\textwidth, trim=10 0 10 0, clip]{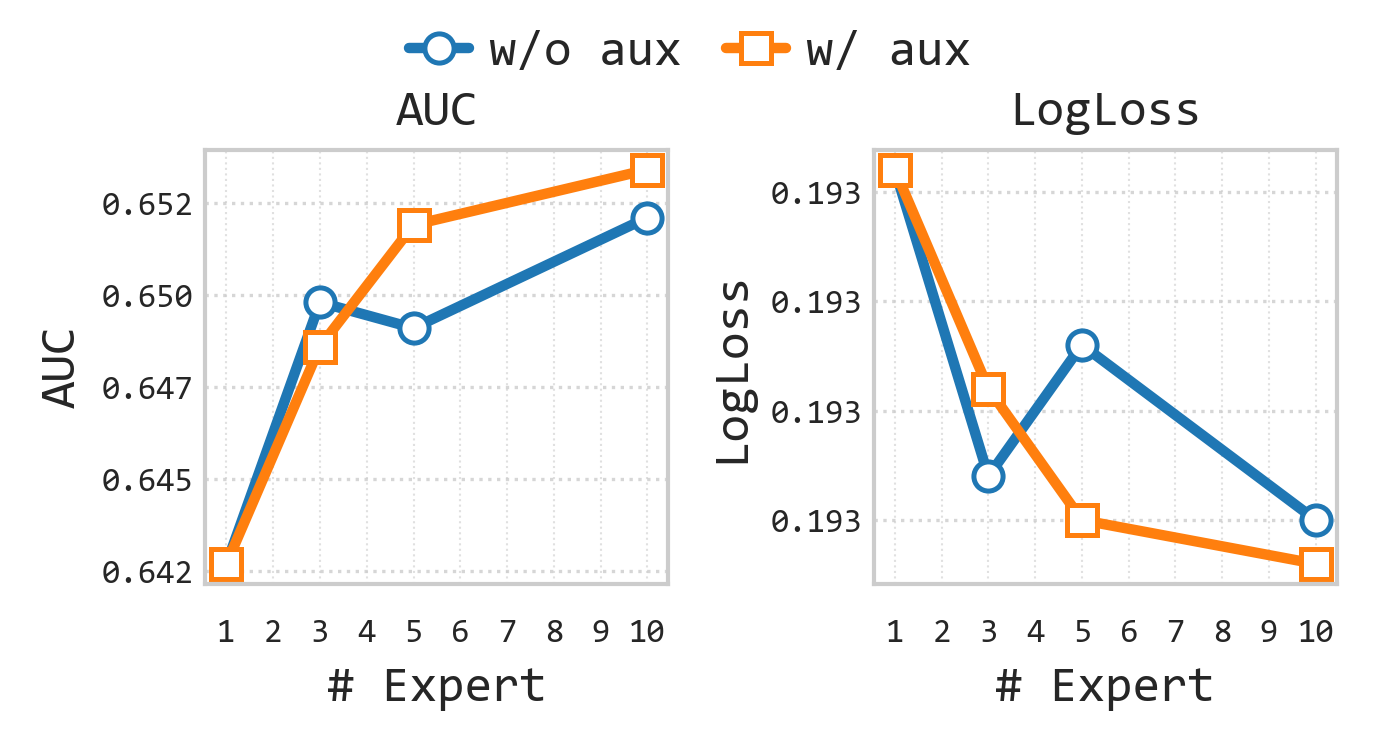}
        \vspace{-15pt}
        \caption{Study on auxiliary losses.}
        \label{fig:exp-aux}
    \end{subfigure}
    \hfill
    \begin{subfigure}{.24\textwidth}
        \centering
        \includegraphics[width=\textwidth, trim=10 0 10 0, clip]{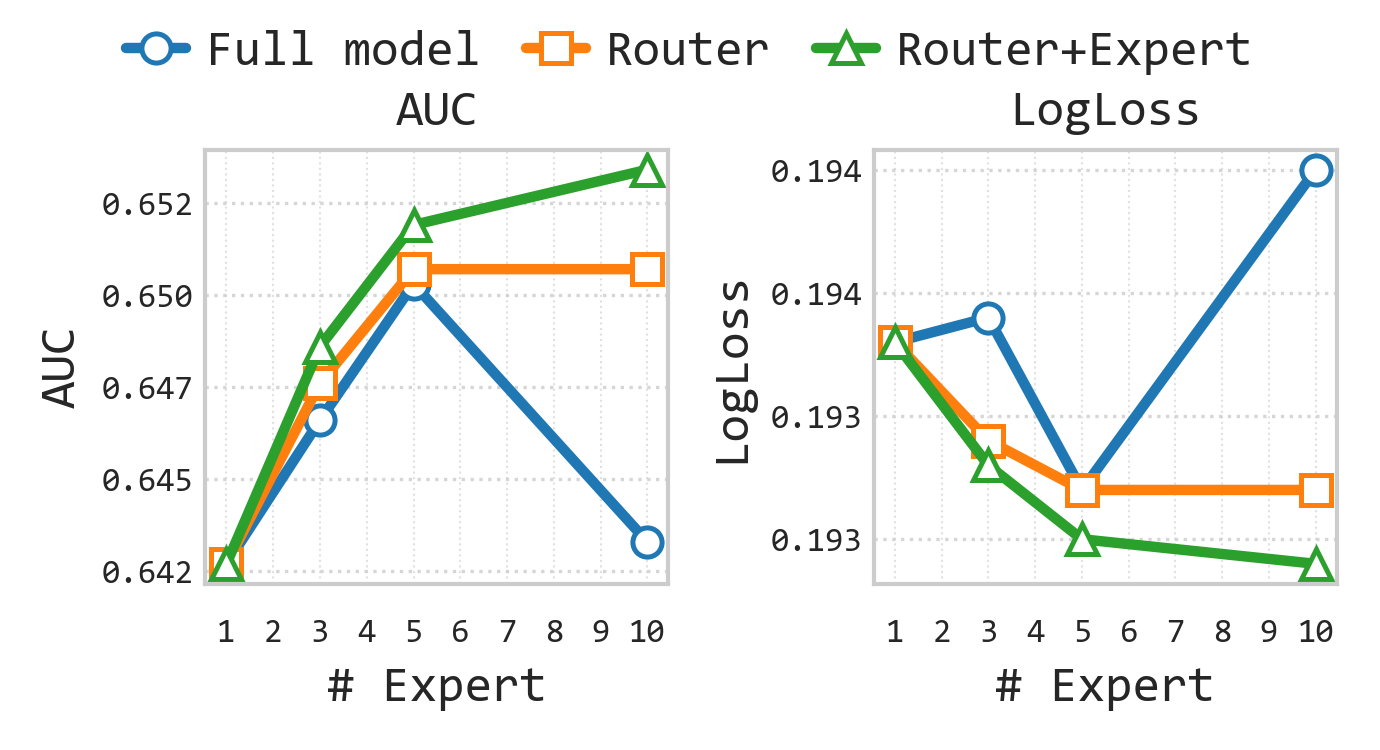}
        \vspace{-15pt}
        \caption{Study on aux loss gradients.}
        \label{fig:exp-grad}
    \end{subfigure}
    \hfill
    \begin{subfigure}{.24\textwidth}
        \centering
        \includegraphics[width=\textwidth, trim=10 0 10 0, clip]{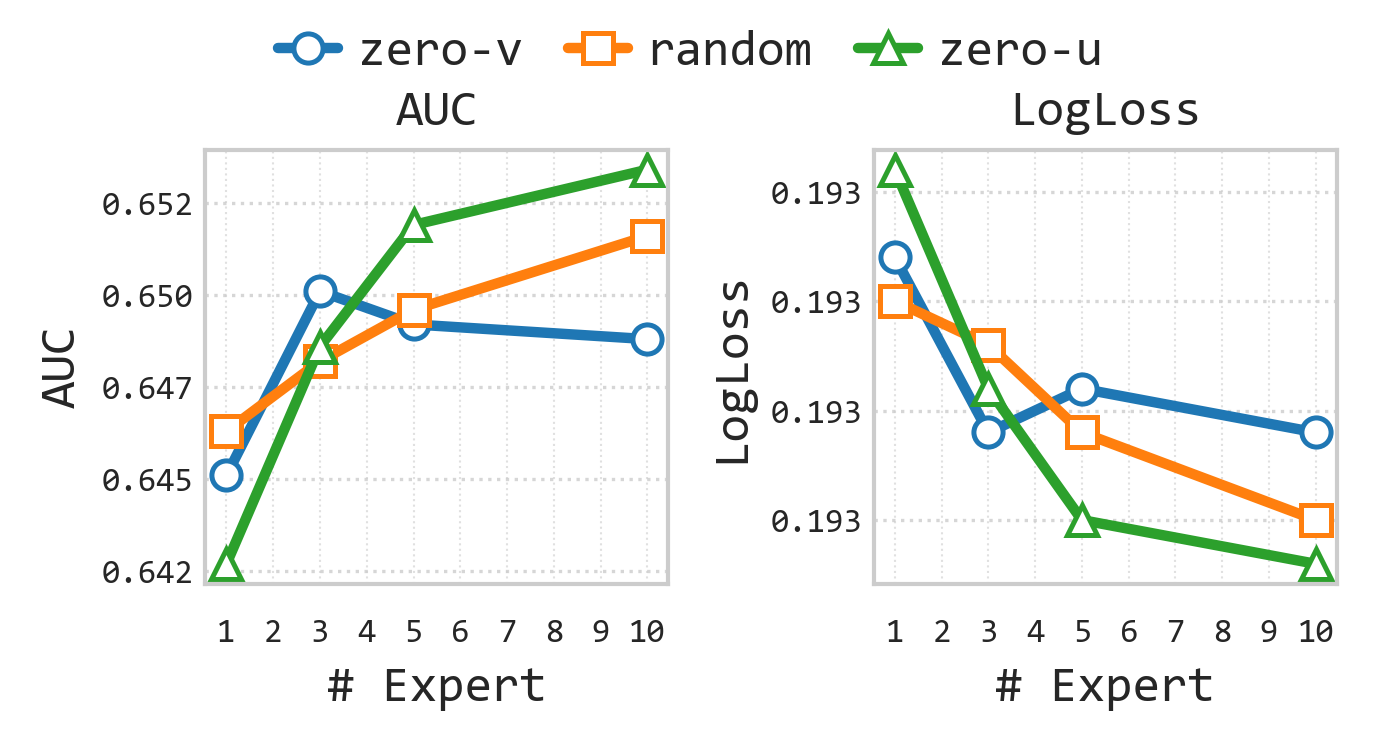}
        \vspace{-15pt}
        \caption{Study on model initialization.}
        \label{fig:exp-init}
    \end{subfigure}
    \vspace{-10pt}
    \caption{Studies on \algname. We compare model performance with (a) different training strategy, (b) auxiliary losses, (c) auxiliary loss gradient flows, and (d) model initialization. In general, \algname\ trained under 3-stage training framework, augmented with auxiliary losses and initialized with $\textit{zero-u}$ initialization yields the best performance.}
    \vspace{-5pt}
    \label{fig:exp-study}
\end{figure*}

We evaluate the performance-efficiency tradeoff of \algname\ by comparing it with the non-MoE baseline.
In general, we compute the relative improvements in AUC against both computational cost (FLOPs) and model size (\#Params), using the non-MoE baseline as a reference point.
We adopt the most state-of-the-art InterFormer model as the baseline, and fine-tune the non-MoE InterFormer configuration to achieve the best AUC.
The non-MoE AUC, \#params and FLOPs are further used as the reference points to compute the relative improvements, as shown in Figure~\ref{fig:tradeoff}.

\textbf{(1) AUC vs FLOPs (left): \algname\ consistently achieves better performance and efficiency compared to the non-MoE baseline}.
Notably, on datasets like Taobao and MicroVideo, \algname\ reduces FLOPs by over 20\% while still improving AUC by approximately 0.2\%, highlighting its strong scalability and practical value in resource-constrained environments.
Even on Amazon, where the reduction in FLOPs is minimal, we still observe a clear AUC gain, suggesting that HiLoMoE can deliver performance benefits even without sacrificing efficiency.

\textbf{(2) AUC vs \#Params (right): \algname\ exhibits substantial reductions in parameter count across all datasets, while achieving consistent AUC improvements}.
On KuaiVideo, \algname\ achieves a 60\% reduction in \#Params compared to the non-MoE baseline.
This demonstrates the effectiveness of the LoRA-based expert design in enabling parameter-efficient modeling without compromising accuracy.

Overall, the tradeoff figures illustrate that \algname\ offers a highly favorable balance between accuracy and efficiency. 
By leveraging low-rank expert parameterization and sparse hierarchical routing, \algname\ successfully reduces resource consumption while enhancing model performance, making it well-suited for large-scale CTR prediction systems in both training and deployment scenarios.

\subsection{Studies}\label{sec:study}
In this section, we carry out studies on model warmup (Section~\ref{sec:exp-warmup}), auxiliary losses (Section~\ref{sec:exp-aux}) and model initialization (Section~\ref{sec:exp-init}).

\subsubsection{On the effect of model warmup}\label{sec:exp-warmup}
We first study the effect of model warmup, and results are shown in Figure~\ref{fig:exp-warm}.
We consider three variants: (1) \emph{no warmup}, where the whole model is trained directly without warmup, (2) \emph{backbone only}, where only the backbone model is warmed up while the experts are not, and (3) \emph{3-stage}, where we perform the three-stage training illustrated in Figure~\ref{fig:train}.

When the numbers of experts increases, \emph{no warmup} leads to unstable performance, with AUC fluctuating and LogLoss remaining relatively high.
\emph{Backbone only} provides partial improvement, but its scaling performance is less stable with performance degrading as the number of experts grows.
In contrast, the proposed \emph{3-stage} warmup achieves the most stable scaling behavior, as AUC increases steadily, while LogLoss decreases smoothly.
These results highlight that progressive warmup not only stabilizes optimization but also allows the experts to be more effectively integrated with the backbone, thereby unlocking the benefits of expert scaling.

\subsubsection{On the effect of auxiliary losses}\label{sec:exp-aux}
We then compare model performance with and without auxiliary losses, and results are shown in Figure~\ref{fig:exp-aux}.
Overall, incorporating auxiliary losses consistently improves both AUC and LogLoss.
Without auxiliary losses, the model exhibits less stable scaling as the number of experts increases, with AUC and LogLoss fluctuating when the number of experts increases from 3 to 10.
By contrast, applying auxiliary losses yields a smoother scaling curve: AUC increases steadily, while LogLoss decreases monotonically.
This suggests that auxiliary objectives effectively regularize the routing mechanism, preventing expert collapse and promoting more balanced utilization. As a result, the experts contribute more complementary knowledge, leading to measurable gains in both predictive accuracy and calibration.

Besides, we study how different gradient flows of auxiliary losses affect model performance.
We consider three different variants, including (1) \emph{full model}, where auxiliary losses affect the training of the whole model, (2) \emph{router}, where the auxiliary losses only affect the router update, and (3) \emph{router+expert}, where the auxiliary losses affect both router and expert update.
We observe that letting the auxiliary losses propagate through the \emph{full model} leads to unstable behavior: while model performance peaks with 5 experts, it quickly drops when the number of experts increases.
Restricting the auxiliary losses to update only the \emph{router} yields a more stable trend, with AUC reaching 0.650 and LogLoss stabilizing around 0.193, but further gains are limited as expert scaling increases.
In contrast, applying auxiliary losses to both the \emph{router+expert} consistently delivers the best performance: AUC steadily improves beyond 0.652 at 10 experts, while LogLoss decreases smoothly to below 0.193.
These results confirm that auxiliary objectives are most effective when guiding both routing and expert specialization, while constraining their gradient flow away from the backbone avoids interference with the main prediction task.

\subsubsection{On the effect of expert initialization}\label{sec:exp-init}
Finally, we study the effect of different expert initialization strategies, and results are shown in Figure~\ref{fig:exp-init}.
We consider three variants: (1) \emph{zero-v}, where $\Vl$ is zero-initialized and $\Ul$ is randomly initialized, (2) \emph{random}, where both $\Vl$ and $\Ul$ are randomly initialized, and (3) \emph{zero-u}, where $\Ul$ is zero-initialized and $\Vl$ is randomly initialized.

Among the three variants, \emph{zero-v} leads to the weakest performance: AUC quickly saturates and LogLoss remains relatively high.
This degradation is mainly because $\Vl$ is used to compute the average expert representation and input queries for hierarchical routing in Eq.~\eqref{eq:query}, and zero-initializing $\Vl$ thus diminishes expert diversity and limits expressiveness in the early training stage.
In contrast, \emph{random} initialization provides a stronger baseline, with AUC gradually improving and LogLoss decreasing smoothly as the number of experts increases. 
The best performance is achieved by \emph{zero-u}, where the input direction $\Ul$ is zero-initialized while $\Vl$ is randomly initialized: AUC surpasses 0.652 at 10 experts, and LogLoss consistently decreases to below 0.193. 
These results suggest that freezing the input direction at initialization suppresses noisy expert activations in early training, while allowing the output direction to adapt flexibly, thereby stabilizing optimization and improving convergence.
\section{Conclusion}\label{sec:con}
In this paper, we present \algname, a hierarchical LoRA MoE framework designed to achieve holistic model scaling for Transformer-based CTR prediction.
By combining low-rank expert parameterization with hierarchical routing, \algname\ supports efficient horizontal and vertical scaling while maintaining parameter efficiency and parallelizable inference.
To ensure stable optimization, we introduce a three-stage training framework with auxiliary losses to encourage expert diversity and balanced utilization.
Extensive experiments across multiple benchmark datasets and backbone architectures demonstrate that \algname\ consistently improves predictive accuracy while reducing computational cost, achieving an average of 0.20\% gains in AUC and an average 18.5\% reduction in FLOPs compared to non-MoE baselines.
These results highlight the promise of structured and parameter-efficient MoE designs for advancing CTR prediction under practical efficiency constraints.

\newpage
\bibliographystyle{ACM-Reference-Format}
\balance
\bibliography{main}

\newpage
\appendix
\section*{Appendix}

\section{Proof}
\subsection{Proof of Proposition~\ref{prop:scale}}
\scale*

\begin{proof}
    Each \algname\ layer with Top-$A$ expert activation generates $\binom{K}{A}$ possible combinations, and stacking $L$ layers further enlarges the number of combinations to $\binom{K}{A}^L$.

    When $K,A$ are fixed, it is obvious to show that the expert complexity grows exponential w.r.t. $L$.

    Besides, we can bound $\binom{K}{A}$ as follows~\cite{cormen2022introduction}:
    \begin{equation*}
        \left(\frac{K}{A}\right)^{A}\leq \binom{K}{A}\leq \left(\frac{eK}{A}\right)^A
    \end{equation*}
    
    Therefore, when $A, L$ are fixed, the expert complexity can be expressed by $N=\Theta(K^{AL})$, which is polynomial w.r.t. $K$.
    
    When $K,L$ are fixed, according to the Stirling approximation~\cite{cormen2022introduction}
    \begin{equation*}
        \ln\binom{K}{A}=A\ln{\frac{eK}{A}}+\mathcal{O}(\ln A),
    \end{equation*}
    which results in an expert complexity of $\mathcal{O}\left(\frac{eK}{A}\right)^{AL}$ that is sublinear w.r.t. $A$.
\end{proof}

\section{Additional Experiments}

We visualize the routing score along the training process with and without auxiliary losses in Figure~\ref{fig:app-routing}.
We adopt Transact as the backbone model, and average the routing score in each 1,000 batches.
Darker color indicates the expert is more frequently activated.
\begin{figure}[h]
    \centering
    \begin{subfigure}{.23\textwidth}
        \centering
        \includegraphics[width=\textwidth, trim=0 0 10 10, clip]{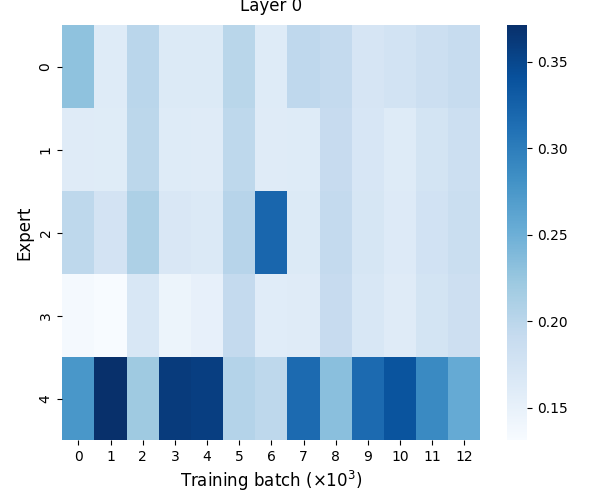}
        \vspace{-18pt}
        \caption{Routing score w/o aux losses.}
        \label{fig:app-routing1}
    \end{subfigure}
    \begin{subfigure}{.23\textwidth}
        \centering
        \includegraphics[width=\textwidth, trim=0 0 10 10, clip]{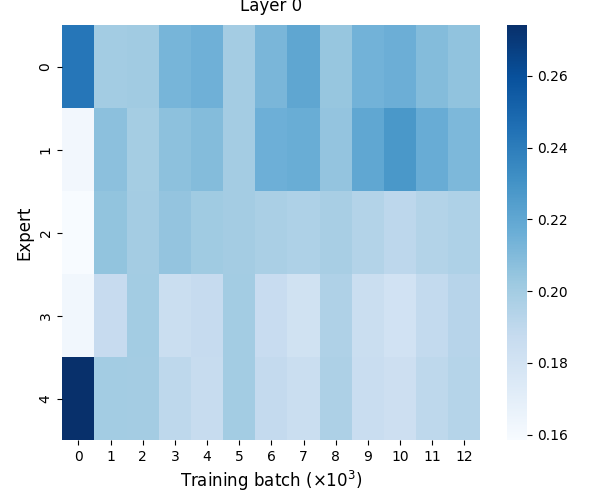}
        \vspace{-18pt}
        \caption{Routing score w/ aux losses.}
        \label{fig:app-routing2}
    \end{subfigure}
    \vspace{-10pt}
    \caption{Routing score along the training process with and without auxiliary losses.}
    \label{fig:app-routing}
\end{figure}

Without auxiliary losses (Figure~\ref{fig:app-routing1}), the router exhibits clear mode collapse: load concentrates on a single expert for long stretches, sharply reducing expert diversity and, in the extreme, degenerating to a non-MoE model. 
In contrast, adding auxiliary losses (Figure~\ref{fig:app-routing2}) yields markedly more even expert utilization over time. 
This prevents collapse, promotes expert specialization, and better exploits MoE diversity for personalized recommendation.

\section{Experiment Pipeline}\label{app:pipeline}
\subsection{Reproducibility}
We adopt the FuxiCTR library~\cite{zhu2020fuxictr} and BARS evaluation framework~\cite{zhu2022bars} for benchmark experiments, and conduct all experiments on NVIDIA A100 GPU.

For model configuration, we adopt the Top-1 activation for MoE.
We fine-tune the number of \algname\ layers in \{1,2,3\}, the number of experts in \{1,3,5,10\}, and the embedding dimensions in \{32,64,128\}.

For model training, an Adam~\cite{kingma2014adam} optimizer with a learning rate scheduler is adopted for model optimization, where the initial learning rate is tuned in \{1e-1, 1e-2, 1e-3\}.
We use a batch size of 2048, and train up to 100 epochs with early stop.
For model warmup, we warm up the backbone model for 10,000 batches, each \algname\ layer for 5,000 batches, and fine-tune the full model till early stop.

For benchmark datasets, we adopt the default processing and configurations in BARS~\cite{zhu2022bars}.


\subsection{Dataset}
Dataset statistics are summarized in Table~\ref{tab:data}, and detailed descriptions are provided as follows:

\begin{table}[htbp]
\caption{Dataset Statistics.}
\label{tab:data}
\begin{tabular}{@{}llll@{}}
\toprule
Dataset           & \#Samples & \#Feat. (Seq/Non-Seq) & Seq Length \\ \midrule
Amazon            & 3.0M      & 5 (2/3)               & 100                            \\
TaobaoAd          & 25.0M     & 20 (3/17)             & 50                             \\
KuaiVideo         & 13.7M     & 6 (2/4)               & 100                            \\ 
MicroVideo         & 12.7M     & 6 (2/4)               & 100                            \\ \bottomrule
\end{tabular}
\end{table}

\begin{itemize}
    \item \textbf{AmazonElectronics}~\cite{he2016ups} is a large-scale dataset containing product reviews and metadata collected from Amazon’s electronics category. It includes 192,403 users, 63,001 items, 801 categories, and a total of 1,689,188 interaction samples.
    The dataset features both non-sequential features, e.g., user ID, item ID, and item category, and sequence features, e.g., interacted items and corresponding categories, truncated or padded to a fixed length of 100.
    Following standard benchmark configurations, the dataset is split into 2.60 million training samples and 0.38 million testing samples.
    \item \textbf{TaobaoAds}~\cite{Tianchi} is a large-scale dataset consisting of 26 million ad click-through records collected over 8 consecutive days on Taobao, randomly sampled from 1,140,000 users.
    The dataset contains both non-sequential and sequential features.
    Non-sequential features include item-related attributes such as ad ID, category, and price, as well as user-related attributes such as user ID, gender, and age.
    Sequence features capture users’ historical interactions, including sequences of item brands, categories, and behavior types, each truncated or padded to a fixed length of 50. 
    Following the public benchmark configuration, we use 22.0 million samples for training and 3.1 million samples for testing.
    \item \textbf{KuaiVideo}~\cite{li2019routing} is a large-scale video interaction dataset comprising 3,239,534 user-video interactions from 10,000 users. 
    It includes both non-sequence and sequence features.
    The non-sequence features consist of user ID, video ID, and pre-extracted visual embeddings of the videos.
    The sequence features capture user behavior histories, such as click, like, and skip interactions, with a fixed sequence length of 100.
    Following the public benchmark split, the dataset is divided into 10.9 million training samples and 2.7 million testing samples.
    \item \textbf{MicroVideo}~\cite{chen2018temporal} is a large-scale dataset collected from a popular micro-video sharing platform, comprising 12,737,619 user–video interactions from 10,986 users and 1,704,880 unique micro-videos.
    Each micro-video is associated with visual features extracted from its cover image using the Inception-v3 model, and a manually assigned category from a predefined set of 512 mutually exclusive categories. 
    Interaction records include user ID, video ID, and timestamp, and cover both positive (clicked) and negative (skipped) feedback.
    Following the public benchmark split, the dataset consists of 8.9 million training samples and 3.8 million testing samples.
\end{itemize}

\end{document}